\DeclareMathOperator{\sech}{sech}
\newcommand{\Zb}{{\blmath Z}}
\newcommand{\wb}{{\blmath w}}
\newcommand{\xb}{{\blmath x}}
\newcommand{\zb}{{\blmath z}}
\newcommand{\Rd}{{\mathbb R}}
\newcommand{\Ed}{{{\mathbb E}}}
\newcommand{\beq}{\begin{equation}}
\newcommand{\eeq}{\end{equation}}
\newcommand{\beqa}{\begin{eqnarray}}
\newcommand{\eeqa}{\end{eqnarray}}
\DeclareMathOperator{\sgn}{sgn}
\DeclareMathOperator{\sigmoid}{\sigma}
\DeclareMathOperator{\mean}{mean}
\DeclareMathOperator{\norm}{norm}
\newcommand{\cmark}{\ding{51}}%
\newcommand{\xmark}{\ding{55}}%
\def\eqref#1{(\ref{#1})}
\newtheorem{theorem}{Theorem}
\newtheorem{lemma}{Lemma}
\newtheorem{proposition}{Proposition}
\newenvironment{customthm}[1]
  {\innercustomthm}
  {\endinnercustomthm}
\newcommand{\green}[1] {\textcolor{OliveGreen}{#1}} % for revision
\newcommand{\Xb}{{\boldsymbol X}}
\newcommand{\mb}{{\boldsymbol m}}
\newcommand{\Oc}{{\mathcal O}}
\newcommand{\Thetab}{{\boldsymbol \Theta}}
\newcommand{\pibold}{{\boldsymbol \pi}}
\newcommand{\zinv}{Z_{inv}}
\newcommand{\winv}{w_{inv}}
\renewcommand{\xb}{\boldsymbol x}
\renewcommand{\wb}{\boldsymbol w}
\renewcommand{\Zb}{\boldsymbol Z}
\renewcommand{\zb}{\boldsymbol z}
\newcommand{\zsp}{\Zb_{sp}}
\newcommand\given[1][]{\:#1\vert\:}
\crefname{section}{Sec.}{Secs.}
\Crefname{section}{Section}{Sections}
\Crefname{table}{Table}{Tables}
\crefname{table}{Tab.}{Tabs.}
\begin{document}

%%%%%%%%% TITLE - PLEASE UPDATE
\title{Training Debiased Subnetworks with Contrastive Weight Pruning}

\author{Geon Yeong Park$^{1}$ \qquad Sangmin Lee$^{2}$ \qquad Sang Wan Lee$^{1*}$ \qquad Jong Chul Ye$^{1,2,3*}$ \\
$^{1}$Bio and Brain Engineering, $^{2}$Mathematical Sciences, $^{3}$Kim Jaechul Graduate School of AI \\
Korea Advanced Institute of Science and Technology (KAIST), Daejeon, Korea \\
{\tt\small \{pky3436, leeleesang, sangwan, jong.ye\}@kaist.ac.kr}
}
% For a paper whose authors are all at the same institution,
% omit the following lines up until the closing ``}''.
% Additional authors and addresses can be added with ``\and'',
% just like the second author.
% To save space, use either the email address or home page, not both
%\and
%Second Author\\
%Institution2\\
%First line of institution2 address\\
%{\tt\small secondauthor@i2.org}
%}
\maketitle

%%%%%%%%% ABSTRACT
\begin{abstract}
Neural networks are often biased to spuriously correlated features that provide misleading statistical evidence that does not generalize. This raises an interesting question: ``Does an optimal unbiased functional subnetwork exist in a severely biased network? If so, how to extract such subnetwork?" While empirical evidence has been accumulated about the existence of such unbiased subnetworks, these observations are mainly based on the guidance of ground-truth unbiased samples. Thus, it is unexplored how to discover the optimal subnetworks with biased training datasets in practice. To address this, here we first present our theoretical insight that alerts potential limitations of existing algorithms in exploring unbiased subnetworks in the presence of strong spurious correlations. We then further elucidate the importance of bias-conflicting samples on structure learning. Motivated by these observations, we propose a Debiased Contrastive Weight Pruning (DCWP) algorithm, which probes unbiased subnetworks without expensive group annotations. Experimental results demonstrate that our approach significantly outperforms state-of-the-art debiasing methods despite its considerable reduction in the number of parameters.
\end{abstract}

\section{Introduction}
While deep neural networks have made substantial progress in solving challenging tasks, they often undesirably rely on spuriously correlated features or dataset bias, if present, which is considered one of the major hurdles in deploying models in real-world applications. For example, consider recognizing desert foxes and cats from natural images. If the background scene (e.g., a desert) is spuriously correlated to the type of animal, the neural networks might use the background information as a shortcut to classification, resulting in performance degradation in different backgrounds (e.g., a desert fox in the house). 

{To investigate the origin of the spurious correlations, this paper considers shortcut learning as a fundamental architectural design issue of neural networks. Specifically, if any available information channels in deep networks' structure could transmit the information of spuriously correlated features (\textit{spurious features} from now on), networks would exploit those features as long as they are sufficiently predictive.} It naturally follows that pruning weights on spurious features can purify the biased latent representations, thereby improving performances on bias-conflicting samples\footnote{The \textit{bias-aligned} samples refer to data with a strong correlation between (potentially latent) spurious features and target labels (e.g., cat in the house). The \textit{bias-conflicting} samples refer to the opposite cases where spurious correlations do not exist (e.g., cat in the desert).}. {We conjecture that this neural pruning may improve the generalization of the network in a way that reduces the effective dimension of spurious features, considering that the failure of Out-of-Distribution (OOD) generalization may arise due to high-dimensional spurious features \cite{tsipras2018robustness, nagarajan2020understanding}.} 
%Accordingly, we hypothesize that such unbiased subnetworks may exist in the pretrained biased network.

Recently, Zhang et al. \cite{zhang2021can} has empirically demonstrated the existence of subnetworks that are less susceptible to spurious features. Based on the modular property of neural networks \cite{csordas2020neural}, they prune out weights that are closely related to the spurious attributes. While \cite{zhang2021can} affords us valuable insights on the importance of neural architectures, the study has limitation in that such neural pruning requires sufficient number of ground-truth bias-conflicting samples. Thus,  \textit{how to discover the optimal subnetworks in practice when the dataset is highly biased?} 

\begin{figure*}[t]
\centering
\includegraphics[width=0.85\textwidth]{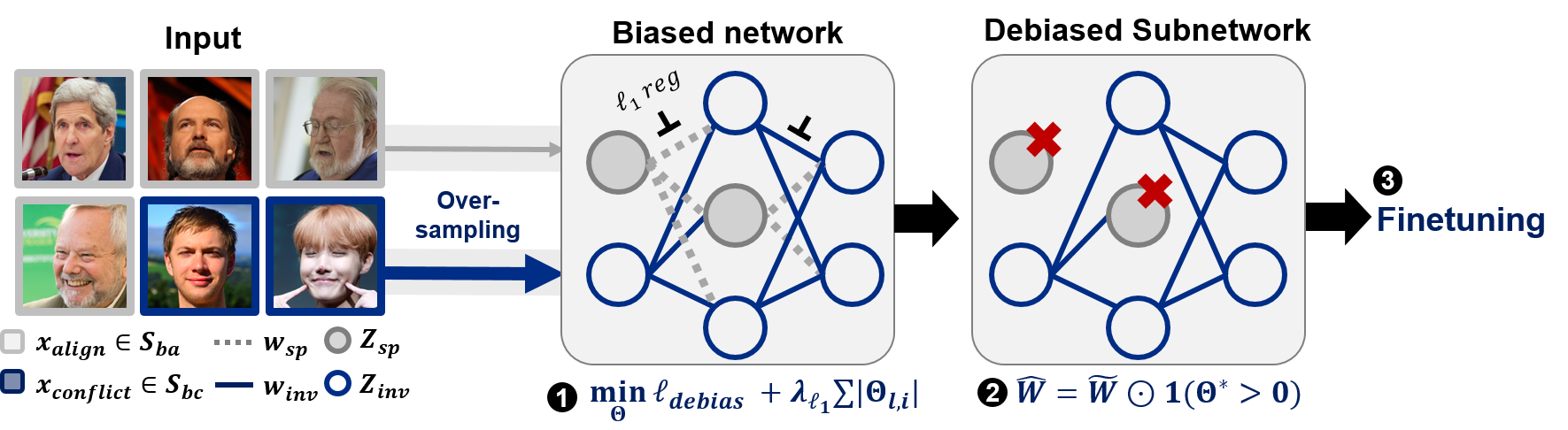} 
\caption{\textbf{Concept}: We demonstrate an inevitable generalization gap of subnetworks obtained by standard pruning methods including \cite{zhang2021can}. Based on these observations, we design a novel subnetwork probing framework by fully exploiting unbiased samples.}
\label{fig:concept}
\end{figure*}

To address this, we first present a simple theoretical observation that reveals the limitations of existing substructure probing methods in searching unbiased subnetworks. Specifically, we reveal that there exists an unavoidable generalization gap in the subnetworks obtained by standard pruning algorithms in the presence of strong spurious correlations. {Our analysis also shows that trained models may inevitably rely on the spuriously correlated features in a practical training setting with finite training time and a number of samples.}

%Moreover, our example affords the insight that sampling more bias-conflicting data enables identifying spurious weights. 
In addition, we show that sampling more bias-conflicting data makes it possible to identify spurious weights. Specifically, bias-conflicting samples require that the weights associated with spurious features should be pruned out as the spurious features do not help predict bias-conflicting samples.
% It leads to the exclusive preservation of non-spurious or \textit{invariant} weights, which closes the generalization error gap. 
Our theoretical observations suggest that balancing the ratio between the number of bias-aligned and bias-conflicting samples is crucial in finding the optimal unbiased subnetworks.

In practice, the dataset may severely lack diversity for bias-conflicting samples due to the potential pitfalls in data collection protocols or human prejudice. Since it is often highly laborious to supplement enough bias-conflicting samples, %it would be better to fully exploit a small set of given bias-conflicting training samples.
%To this end, 
we propose a novel debiasing scheme called Debiased Contrastive Weight Pruning (DCWP) that uses the oversampled bias-conflicting data to search unbiased subnetworks.

As shown in Fig.~\ref{fig:concept}, DCWP is comprised of two stages: (1) identifying the bias-conflicting samples without expensive annotations on spuriously correlated attributes, and (2) training the pruning parameters to obtain weight pruning masks with  the sparsity constraint and \textit{debiased} loss function.
Here, the debiased loss includes 
a weighted cross-entropy loss for the identified bias-conflicting samples and an alignment loss to further reduce the geometrical alignment gap between bias-aligned and bias-conflicting samples within each class.  

We demonstrate that DCWP consistently outperforms state-of-the-art debiasing methods across various biased datasets, including the Color-MNIST \cite{li2019repair, nam2020learning}, Corrupted CIFAR-10 \cite{hendrycks2019benchmarking}, Biased FFHQ \cite{kim2021biaswap} and CelebA \cite{liu2015deep}, even without direct supervision on the bias type. Our approach improves the accuracy on the unbiased evaluation dataset by \(86.74\% \rightarrow 93.41\%\), \(27.86\% \rightarrow 35.90\%\) on Colored-MNIST and Corrupted CIFAR-10 compared to the second best model, respectively, even when \(99.5\%\) of samples are bias-aligned.

%\textbf{Contributions.} In summary, the contributions of our paper are as follows:
%\begin{itemize}
%    \item We provide novel theoretical insights on the limitations of standard pruning algorithms in searching unbiased subnetworks and demonstrate how data sampling strategy can mitigate these problems. 

\section{Related works}
\textbf{Spurious correlations.}  A series of empirical works have shown that the deep networks often find shortcut solutions relying on spuriously correlated attributes, such as the texture of image \cite{geirhos2018imagenet}, language biases \cite{gururangan2018annotation}, or sensitive variables such as ethnicity or gender \cite{narayanan2018translation, feldman2015certifying}. Such behavior is of practical concern because it deteriorates the reliability of deep networks in sensitive applications like healthcare, finance, and legal services \cite{corbett2018measure}.

\textbf{Debiasing frameworks.}  Recent studies  to train a debiased network robust to spurious correlations can be roughly categorized into approaches (1) leveraging annotations of spurious attributes, i.e., bias label  \cite{sagawa2019distributionally, wang2020towards},  (2) presuming specific type of bias, e.g., texture \cite{bahng2020learning, ge2021robust} or (3) without using explicit kinds of supervisions on dataset bias \cite{nam2020learning, lee2021learning}. The authors in \cite{sagawa2019distributionally, hu2018does} optimize the worst-group error by using training group information. For practical implementation, reweighting or subsampling protocols are often used with increased model regularization \cite{sagawa2020investigation}. Liu et al.; Sohoni et al. \cite{liu2021just, sohoni2020no} extend these approaches to the settings without expensive group annotations. Goel et al.; Kim et al. \cite{goel2020model, kim2021biaswap} provide bias-tailored augmentations to balance the majority and minority groups. In particular, these approaches have mainly focused on better approximation and regularization of worst-group error combined with advanced data sampling, augmentation, or retraining strategies.

\textbf{Studying impacts of neural architectures.} Recently, the effects of deep neural network architecture on generalization performance have been explored. Diffenderfer et al. \cite{diffenderfer2021winning} employ recently advanced lottery-ticket-style pruning algorithms \cite{frankle2018lottery} to design the compact and robust network architecture. Bai et al. \cite{bai2021ood} directly optimize the neural architecture in terms of accuracy on OOD samples. %, but it cannot fundamentally eliminate the connections to the spurious input attributes. 
Zhang et al. \cite{zhang2021can} demonstrate the effectiveness of pruning weights on spurious attributes, but the solution for discriminating such spurious weights lacks robust theoretical justifications, resulting in marginal performance gains. To fully resolve the above issues, we carry out a theoretical case study, and build a novel pruning algorithm that distills the representations to be independent of the spurious attributes.

\section{Theoretical insights}
\label{sec: theoretical insights}
\subsection{Problem setup}
Consider a supervised setting of predicting labels \(Y \in \mathcal{Y}\) from input samples \(X \in \mathcal{X}\) by a classifier \(f_{\theta}: \mathcal{X} \rightarrow \mathcal{Y}\) parameterized by \(\theta \in \Theta\). 
Following \cite{zhang2021can},
{let \((X^e, Y^e) \sim P^e\),  where $X^e\in \mathcal{X}$ and $Y^e\in \mathcal{Y}$
refer to  the input random variable and the corresponding label, respectively, and
\(e \in \mathcal{E} = \{1, 2, \dots E\} \) denotes the index of  environment, $P^e$ is the corresponding distribution, and the set \(\mathcal{E}\) corresponds to every possible environments.} 
We further assume that \(\mathcal{E}\) is divided into training environmments \(\mathcal{E}_{train}\) and unseen test environments \(\mathcal{E}_{test}\), i.e. \(\mathcal{E} = \mathcal{E}_{train} \cup \mathcal{E}_{test}\). 

For a given a loss function \(\ell: \mathcal{X} \times \mathcal{Y} \times \Theta \rightarrow \mathbb{R^+}\), the standard training protocol for the empirical risk
minimization (ERM) is to minimize the expected loss with a training environment \(e \in \mathcal{E}_{train}\):
\begin{equation}
\label{3.1 eq: generic_loss}
\hat{\theta}_{ERM} = \arg \min_{\theta} \mathbb{E}_{(X^e, Y^e) \sim \hat{P}^e} \big[\ell(X^e, Y^e; \theta) \big],
\end{equation}
{where \(\hat{P}^e\) is the empirical distribution over the training data.} Our goal is to learn a model with good performance on OOD samples of \(e \in \mathcal{E}_{test}\).

\subsection{Motivating example}
We conjecture that neural networks trained by ERM indiscriminately rely on predictive features, including those spuriously correlated ones \cite{tsipras2018robustness}. %Specifically, ERM models may be sensitive to every strongly-correlated feature regardless of whether it is causally related.  

To verify this conjecture, we present a simple binary-classification example \((\boldsymbol{X}^e, Y^e) \sim P^e\), where \(Y^e \in \mathcal{Y}=\{-1, 1\}\) represents the corresponding target label, and  a sample \(\boldsymbol{X}^e \in \mathcal{X} = \{-1, 1\}^{D+1} \in \mathbb{R}^{D+1}\) is constituted with both the invariant feature \(Z^e_{inv} \in \{-1, 1\}\) and spurious features \(\boldsymbol{Z}^e_{sp} \in \{-1, 1\}^D\), i.e. \(\boldsymbol{X}^e = (Z_{inv}^e, \boldsymbol{Z}_{sp}^e)\). 
Suppose, furthermore, \({Z}_{sp, i}^e\) denote the \(i\)-th spurious feature component of \(\boldsymbol{Z}_{sp}^e\).
Note that we assume \(D \gg 1\) to simulate the model heavily relies on spurious features \(\boldsymbol{Z}_{sp}^e\) \cite{nagarajan2020understanding, zhang2021can}.

We consider the setting where the training environment \(e \in \mathcal{E}_{train}\) is highly biased. In other words, we suppose that \(Z_{inv}^e = Y^e\), and each of the \(i\)-th spurious feature component \({Z}_{sp, i}^e\) is independent and identically distributed (i.i.d) Bernoulli variable: i.e. \({Z}_{sp, i}^e\) independently takes a value equal to \(Y^e\) with a probability \(p^e\) and \(-Y^e\) with a probability \(1-p^e\), where \(p^e \in (0.5, 1], \forall e \in \mathcal{E}_{train}\). Note that \(p^e \rightarrow 1\) as the environment is severely biased. A test environment \(e \in \mathcal{E}_{test}\) is assumed to have \(p^e=0.5\), which implies that the spurious feature is totally independent with \(Y^e\). Then we introduce a linear classifier \(f\) parameterized by a weight vector \(\boldsymbol{w}=(w_{inv}, \boldsymbol{w}_{sp})  \in \mathbb{R}^{D+1}\), where \(w_{inv} \in \mathbb{R}\) and \(\boldsymbol{w}_{sp} \in \mathbb{R}^{D}\). {In this example, we consider a class of pretrained classifiers parameterized by \(\tilde{\boldsymbol{w}}(t) = \big( \tilde{w}_{inv}(t), \tilde{w}_{sp, 1}(t), \dots, \tilde{w}_{sp, D}(t) \big)\), where $t < T$ is a finite pretraining time for some sufficiently large $T$. Time $t$ will be often omitted in notations for simplicity.}
%indeed realizable

Our goal is to obtain the optimal sparse classifier with a highly biased training dataset. To achieve this, we introduce a binary weight pruning mask \(\boldsymbol{m}\) as \(\boldsymbol{m}=(m_{inv}, \boldsymbol{m}_{sp}) \in \{0, 1\}^{D+1}\) for the pretrained weights, which is a significant departure from the theoretical setting in \cite{zhang2021can}.  Specifically, let \(m_{inv} \sim Bern(\pi_{inv})\),  
%\begin{equation}
%\label{3.2 eq: m_inv}
%P(m_{inv}=k)= 
%\begin{cases}
 %   \pi_{inv}, & \text{if } k=1 \\
%    1-\pi_{inv}, & \text{if } k=0,
%\end{cases}
%\end{equation}
where \(\pi_{inv}\) and \(1-\pi_{inv}\) represents the probability of preserving (i.e. $m_{inv}=1$) and pruning out (i.e. $m_{inv}=0$), respectively. Similarly, let ${m}_{sp, i} \sim Bern(\pi_{sp, i}),\forall i$. 
Then, our optimization goal is to estimate the pruning probability parameter 
$\boldsymbol{\pi} = ({\pi}_1, \dots, {\pi}_{D+1})= (\pi_{inv}, \pi_{sp, 1}, \dots, \pi_{sp, D})$, {where \(\boldsymbol{m} \sim P(\boldsymbol{\pi})\) is a mask sampled with probability parameters $\boldsymbol{\pi}$. Accordingly, our main loss function for the pruning parameters given the environment \(e\) can be defined as follows:
\begin{equation}
\label{3.2 eq: loss}
\begin{split}
  \ell^e(\boldsymbol{\pi}) &= \frac{1}{2} \mathbb{E}_{\Xb^e, Y^e, \boldsymbol{m}}[1-Y^e\hat{Y}^e] \\
  &= \frac{1}{2} \mathbb{E}_{\Xb^e, Y^e, \boldsymbol{m}} \left[1-Y^e \cdot \sgn\left( \boldsymbol{\tilde{w}}^T(\boldsymbol{X}^e \odot \boldsymbol{m})\right)\right],
\end{split}
\end{equation}
where \(\hat{Y}^e\) is the prediction of binary classifier, \(\boldsymbol{\tilde{w}}\) is the pretrained weight vector, \(\sgn(\cdot)\) represents the sign function, and \(\odot\) represents element-wise product. %The distribution function of each variable is omitted in expectations for notational simplicity. 
%In practice, we apply strong sparsity constraint as \(\ell_1\) penalization in terms of \(\boldsymbol{\pi}\) along with \(\ell^e(\boldsymbol{\pi})\) to obtain a sparse solution.
}

%We interchangeably apply the weight pruning mask \(\boldsymbol{m}\) to \(\boldsymbol{X}^e\) for notational simplicity which does not affect on results. Note that we freeze the pretrained weight vector \(\boldsymbol{\tilde{w}}\) as-is during training pruning parameters. 

We first derive the upper-bound of the training loss \(\ell^e(\boldsymbol{\pi})\) to illustrate the difficulty of learning optimal pruning parameters in a biased data setting.
The proof can be found in Supplementary Material.
\begin{theorem}
\label{3.2 thm: training upper bound}
(Training and test bound) {Assume that \(p^e > {1} / {2}\) in the biased training environment \(e \in \mathcal{E}_{train}\). Define $\tilde{\boldsymbol{w}}(t)$ as weights pretrained for a finite time $t < T$. Then the upper bound of the error of training environment w.r.t. pruning parameters $\boldsymbol{\pi}$ is given as}: 
%%%%%%%
% 발표시 big-o notation으로 informal하게 표현 O()
%%%%%%%
\begin{equation}
\label{3.2 eq: training upper bound}
\ell^e(\boldsymbol{\pi}) \leq 2\exp\bigg(-\frac{2\big(\pi_{inv} + (2p^e-1)\sum_{i=1}^{D}  \alpha_i(t) \pi_{sp, i} \big)^2}{4 \sum_{i=1}^D \alpha_i(t)^2 + 1}\bigg),
\end{equation}
{where the weight ratio \(\alpha_i(t) = {\tilde{w}_{sp, i}(t)} / {\tilde{w}_{inv}(t)}\) is bounded below some positive constant. Given a test environment \(e \in \mathcal{E}_{test}\) with \(p^e = \frac{1}{2}\), the upper bound of the error of test environment w.r.t. $\boldsymbol{\pi}$ is given as}:
\begin{equation}
\label{3.2 eq: test upper bound}    
\ell^e(\boldsymbol{\pi}) \leq 2\exp\Big(-\frac{2\pi_{inv}^2}{4 \sum_{i=1}^D \alpha_i(t)^2+1}\Big),
\end{equation}
which implies that there is an unavoidable gap between training bound and test bound.
\end{theorem}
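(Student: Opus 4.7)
The plan is to reduce $\ell^e(\boldsymbol{\pi})$ to a misclassification probability, write the signed margin as a sum of bounded independent random variables, and close the argument with a single application of Hoeffding's inequality. Since $\hat{Y}^e = \sgn(\tilde{\boldsymbol{w}}^T(\boldsymbol{X}^e \odot \boldsymbol{m}))\in\{-1,0,+1\}$, the integrand $\tfrac{1}{2}(1 - Y^e\hat{Y}^e)$ vanishes when the prediction is correct and is at most $1$ otherwise, so
\begin{equation*}
\ell^e(\boldsymbol{\pi}) \le P\!\left(Y^e\,\tilde{\boldsymbol{w}}^T(\boldsymbol{X}^e \odot \boldsymbol{m}) \le 0\right).
\end{equation*}
Using the data model $Z_{inv}^e = Y^e$ and $Z_{sp,i}^e = Y^e \xi_i$ with $\xi_i \in \{-1,+1\}$ i.i.d.\ and $\mathbb{E}[\xi_i] = 2p^e - 1$, and using that $(Y^e)^2 = 1$, the signed margin becomes
\begin{equation*}
S := Y^e\,\tilde{\boldsymbol{w}}^T(\boldsymbol{X}^e \odot \boldsymbol{m}) = \tilde{w}_{inv}(t)\,m_{inv} + \sum_{i=1}^D \tilde{w}_{sp,i}(t)\,\xi_i\,m_{sp,i},
\end{equation*}
which is a sum of $D+1$ mutually independent random variables by the independence of the mask entries and of the $\xi_i$'s; the $m_{inv}$ term lies in an interval of length $\tilde{w}_{inv}(t)$, and each $\xi_i m_{sp,i}$ term lies in an interval of length $2|\tilde{w}_{sp,i}(t)|$.

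Taking expectations and factoring out $\tilde{w}_{inv}(t)>0$ (reasonable since the invariant feature perfectly explains $Y^e$ during pretraining) yields
\begin{equation*}
\mathbb{E}[S] \;=\; \tilde{w}_{inv}(t)\!\left(\pi_{inv} + (2p^e-1)\sum_{i=1}^D \alpha_i(t)\,\pi_{sp,i}\right).
\end{equation*}
Applying the two-sided Hoeffding inequality with deviation $\mathbb{E}[S]$ then gives
\begin{equation*}
P(S\le 0) \le P(|S-\mathbb{E}[S]|\ge \mathbb{E}[S]) \le 2\exp\!\left(-\frac{2\,\mathbb{E}[S]^2}{\tilde{w}_{inv}(t)^2 + 4\sum_{i=1}^D \tilde{w}_{sp,i}(t)^2}\right),
\end{equation*}
and dividing both numerator and denominator inside the exponent by $\tilde{w}_{inv}(t)^2$ produces exactly the claimed training bound \eqref{3.2 eq: training upper bound}. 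The test bound \eqref{3.2 eq: test upper bound} follows immediately by substituting $p^e = 1/2$, which collapses the spurious contribution to $\mathbb{E}[S]$. The generalization-gap assertion is then a direct comparison: the test bound depends only on $\pi_{inv}$, whereas the training bound can be tightened by increasing any $\pi_{sp,i}$ (weighted by the biased direction $2p^e-1 > 0$), so any pruning rule driven purely by the training signal is free to preserve spurious weights without any visible training cost.

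The main obstacle is control of the weight ratio $\alpha_i(t)$. The Hoeffding step itself is standard, but the bound only becomes meaningful once one justifies that, for finite pretraining time $t<T$, both $\tilde{w}_{inv}(t)$ stays strictly positive and the spurious magnitudes $\tilde{w}_{sp,i}(t)$ remain nontrivial, i.e.\ $\alpha_i(t)$ is bounded below by a positive constant. This requires a short side analysis of ERM gradient dynamics on the biased data: under $p^e$ close to $1$, every feature coordinate—including the spurious ones—has nonzero gradient early in training, and finite training time $T$ prevents the spurious weights from being driven to zero. Formalizing this growth condition is the non-routine part; everything else is bookkeeping around Hoeffding.
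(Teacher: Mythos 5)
Your proposal is correct and follows essentially the same route as the paper: reduce the loss to the misclassification probability of the signed margin, compute its conditional mean $\tilde{w}_{inv}\pi_{inv}+\sum_i(2p^e-1)\tilde{w}_{sp,i}\pi_{sp,i}$, apply two-sided Hoeffding with denominator $\tilde{w}_{inv}^2+4\sum_i\tilde{w}_{sp,i}^2$, and normalize by $\tilde{w}_{inv}^2$; the test bound is the $p^e=\tfrac12$ specialization exactly as you say. The one piece you defer---the positive lower bound on $\alpha_i(t)$ for finite $t<T$---is precisely what the paper also defers to a separate gradient-flow analysis (its Proposition 1), and your sketch of why it holds matches that argument.
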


% \cite{ilyas2019adversarial, shah2020pitfalls}
{The detailed proof of Theorem \ref{3.2 thm: training upper bound} is provided in the supplementary material. %The gap between (\ref{3.2 eq: training upper bound}) and (\ref{3.2 eq: test upper bound}) severely deteriorates the reliability of subnetworks obtained by training \(\boldsymbol{\pi}\). 
This mismatch of the bounds is attributed to the contribution of \(\pi_{sp, i}\) on the training bound (\ref{3.2 eq: training upper bound}). Intuitively, the networks prefer to \textit{preserve both} \(\tilde{w}_{inv}\) and \(\tilde{w}_{sp, i}\) in the presence of strong spurious correlations due to the inherent sensitivity of ERM to all kinds of predictive features \cite{ilyas2019adversarial, tsipras2018robustness}. This behavior is directly reflected in the training bound, where increasing either \(\pi_{inv}\) or \(\pi_{sp, i}\), i.e., the probability of preserving weights, decreases the training bound. This {inertia} of spurious weights may prevent themselves from being primarily pruned against the sparsity constraint.} %Note that the only way to close the gap between each bound is setting $p^e$ as $\frac{1}{2}$.

We note that the unintended reliance on spurious features is fundamentally rooted to the positivity of the weight ratio $\alpha_i(t)$. In the proof of Theorem~\ref{3.2 thm: training upper bound} in Supplementary Material, we show some intriguing properties of $\alpha_i(t)$: (\textbf{1}) If infinitely many data and sufficient training time is provided, the gradient flow converges to the optimal solution which is invariant to $\boldsymbol{Z}^e_{sp}$, i.e., $\alpha_i(t) \rightarrow 0$. In this ideal situation, the gap between training and test bound is closed, thereby guaranteeing generalizations of obtained subnetworks. (\textbf{2}) However, given a finite time $t < T$ with a strongly biased dataset in practice, $\alpha_i(t)$ is bounded below by some positive constant, resulting in an inevitable generalization gap. 
%We provide details about the dynamics of $\alpha_i(t)$ in the appendix.
%this convergence is frustratingly slow.

%Then, how can we prioritize spurious weights to be pruned out?
 %The above discussion illustrates the risk of reliance on spurious features. In this regard, 
 Theorem \ref{3.2 thm: training upper bound} implies that the classifier may preserve spurious weights due to the lack of bias-conflicting samples, which serve as counterexamples that spurious features themselves fail to explain. It motivates us to analyze the training bound in another environment $\eta$ where we can systematically augment bias-conflicting samples.  
Specifically, consider \(\boldsymbol{X}^\eta = (Z_{inv}^\eta, \boldsymbol{Z}_{sp}^\eta)\),
%Assume that \(p^\eta\) is close to 1, 
where \(Z_{inv}^\eta = Y^\eta\) 
and \textit{mixture distribution} of \(\boldsymbol{Z}_{sp}^{\eta}\) given \(Y^\eta=y\) is defined in an element wise as follows:
\begin{equation}
\begin{split}
\label{3.2 eq: mixture distribution}
P_{mix}^{\eta}({Z}_{sp, i}^{\eta} \mid Y^\eta=y) = &\phi P_{debias}^{\eta}({Z}_{sp, i}^{\eta} \mid Y^{\eta}=y) + \\ 
&(1-\phi) P_{bias}^{\eta}({Z}_{sp, i}^{\eta} \mid Y^{\eta}=y),
\end{split}
\end{equation}
where \(\phi\) is a scalar mixture weight,
\begin{equation}
\label{3.2 eq: mixture_debias}
P_{debias}^{\eta}({Z}_{sp, i}^{\eta}\mid Y^{\eta}=y)= 
\begin{cases}
    1, & \text{if } {Z}_{sp, i}^{\eta} = -y\\
    0, & \text{if } {Z}_{sp, i}^\eta = y 
\end{cases}
\end{equation} 
{is a debiasing distribution to weaken the correlation between \(Y^\eta\) and \({Z}_{sp, i}^\eta\) by setting the value of \(Z_{sp,i}^\eta\) as \(-Y^\eta\), and}
\begin{equation}
\label{3.2 eq: mixture_bias}
P_{bias}^\eta({Z}_{sp, i}^\eta\mid Y^\eta=y)= 
\begin{cases}
    p^\eta, & \text{if } {Z}_{sp, i}^\eta = y\\
    1-p^\eta, & \text{if } {Z}_{sp, i}^\eta = -y
\end{cases}
\end{equation}
{is a biased distribution similarly defined in the previous environment \(e \in \mathcal{E}_{train}\).
Given this new environment \(\eta\), the degree of spurious correlations can be controlled by \(\phi\). This leads to a training bound as follow:}

\begin{theorem} 
\label{3.2 thm: mixture training bound}
(Training bound with the mixture distribution) Assume that the defined mixture distribution \(P_{mix}^\eta\) is biased, i.e., for all $i \in \{1, \dots, D\}$,
\begin{equation}
\label{3.2 eq: mixture distribution condition}
  P_{mix}^\eta({Z}_{sp, i}^\eta = -y \mid Y^e=y) \leq P_{mix}^\eta({Z}_{sp, i}^\eta = y \mid Y^\eta=y). 
\end{equation}
Then, \(\phi\) satisfies \(0 \leq \phi \leq 1-\frac{1}{2p^\eta}\). Then the upper bound of the error of training environment \(\eta\) w.r.t. the pruning parameters is given by
\begin{equation}
\begin{split}
\label{3.2 eq: mixture training bound}
&\ell^\eta(\pibold) \leq \\
&2\exp\left(-\frac{2(\pi_{inv} +(2p^\eta(1-\phi)-1)  \sum_{i=1}^D \alpha_i(t) \pi_{sp, i})^2}{4 \sum_{i=1}^D \alpha_{i}(t)^2 + 1}\right).      
\end{split}
\end{equation}
Furthermore, when \(\phi=1-\frac{1}{2p^\eta}\), the mixture distribution is perfectly debiased, and we have
\begin{equation}
\label{3.2 eq: mixture debiased training bound}
\ell^\eta(\pibold) \leq 2\exp\Big(-\frac{2\pi_{inv}^2}{4 \sum_{i=1}^D \alpha_i(t)^2+1}\Big),
\end{equation}
which is equivalent to the test bound in (\ref{3.2 eq: test upper bound}).
\end{theorem}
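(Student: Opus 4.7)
The plan is to reduce the mixture-based training bound to the Bernoulli-based bound already established in Theorem~\ref{3.2 thm: training upper bound}, so that no new concentration argument is required. The first observation is that $P_{mix}^\eta(\cdot \mid Y^\eta = y)$, being a convex combination of two Bernoulli distributions on $\{-1,+1\}$, is itself Bernoulli. A quick computation using \eqref{3.2 eq: mixture_debias} and \eqref{3.2 eq: mixture_bias} gives
\[
P_{mix}^\eta(Z_{sp,i}^\eta = y \mid Y^\eta = y) = \phi \cdot 0 + (1-\phi)\, p^\eta = (1-\phi)\, p^\eta,
\]
since the debiasing component assigns no mass to $y$. I would then define an effective correlation strength $p^\eta_{\mathrm{eff}} := (1-\phi)\, p^\eta$ and treat environment $\eta$ as a standard biased environment with $p^\eta_{\mathrm{eff}}$ in place of $p^e$.

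Next, I would derive the admissible range of $\phi$ by substituting these complementary probabilities into the bias inequality \eqref{3.2 eq: mixture distribution condition}. The inequality collapses to $1-(1-\phi) p^\eta \leq (1-\phi)\, p^\eta$, i.e., $(1-\phi)\, p^\eta \geq 1/2$, which rearranges to $\phi \leq 1 - 1/(2p^\eta)$; combined with $\phi \geq 0$ (since $\phi$ is a mixture weight) this yields the claimed range, and equivalently $p^\eta_{\mathrm{eff}} \in [1/2,\, p^\eta]$, which is precisely the regime in which Theorem~\ref{3.2 thm: training upper bound} is formulated.

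With this reduction in hand, I would invoke the Hoeffding-style argument used to prove Theorem~\ref{3.2 thm: training upper bound} essentially verbatim. The margin $Y^\eta\, \tilde{\boldsymbol{w}}^T(\boldsymbol{X}^\eta \odot \boldsymbol{m})$ decomposes into $\tilde{w}_{inv}\, m_{inv}$ plus $\sum_i \tilde{w}_{sp,i}\, m_{sp,i}\, (Y^\eta Z_{sp,i}^\eta)$, and under $P_{mix}^\eta$ the variables $Y^\eta Z_{sp,i}^\eta$ are i.i.d.\ with mean $2 p^\eta_{\mathrm{eff}} - 1 = 2 p^\eta(1-\phi) - 1$. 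The expected margin is therefore $\tilde{w}_{inv}\bigl[\pi_{inv} + (2p^\eta(1-\phi)-1)\sum_i \alpha_i(t)\, \pi_{sp,i}\bigr]$, while the bounded ranges entering Hoeffding's denominator ($\tilde{w}_{inv}^2$ for the invariant term and $(2\tilde{w}_{sp,i})^2$ for each spurious term) are unaffected by the mixture. Dividing through by $\tilde{w}_{inv}^2$ reproduces the denominator $4\sum_i \alpha_i(t)^2 + 1$ and yields \eqref{3.2 eq: mixture training bound}.

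Finally, for the perfectly debiased case $\phi = 1 - 1/(2p^\eta)$, we have $p^\eta_{\mathrm{eff}} = 1/2$, so the coefficient $2p^\eta(1-\phi)-1$ vanishes and the entire $\pi_{sp,i}$ contribution drops out of the numerator, giving exactly \eqref{3.2 eq: mixture debiased training bound}, which coincides with the test bound \eqref{3.2 eq: test upper bound}. The only real obstacle is bookkeeping: carrying the sign convention of $Y^\eta Z_{sp,i}^\eta$ consistently through the mixture computation and confirming that the positivity of $\alpha_i(t)$ and $\tilde{w}_{inv}$ inherited from the pretraining analysis lets us apply Hoeffding in the one-sided form needed for the margin's lower-tail probability; no new analytic step beyond Theorem~\ref{3.2 thm: training upper bound} is required.
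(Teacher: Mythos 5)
Your proposal is correct and follows essentially the same route as the paper: it derives the admissible range of $\phi$ from the bias condition via $(1-\phi)p^\eta \geq 1/2$, computes the conditional mean of the margin with the mixture contributing the coefficient $2p^\eta(1-\phi)-1$, and reuses the Hoeffding argument of Theorem~\ref{3.2 thm: training upper bound} with an unchanged denominator. Your framing via an effective Bernoulli parameter $p^\eta_{\mathrm{eff}}=(1-\phi)p^\eta$ is just a tidier packaging of the paper's direct computation of $\mathbb{E}_{debias}$ and $\mathbb{E}_{bias}$; the substance is identical.
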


The detailed proof is provided in the supplementary material. Our new training bound (\ref{3.2 eq: mixture training bound}) suggests that the significance of \(\pi_{sp,i}\) on training bound decreases as \(\phi\) progressively increases, and  at the extreme end  with \(\phi = 1 - \frac{1}{2p^\eta}\), it can be easily shown that \(P_{mix}^\eta({Z}_{sp, i}^\eta \mid Y^\eta = y) = \frac{1}{2}\) for both \(y=1\) and \(y=-1\) so that \({Z}^\eta_{sp, i}\) turns out to be random. 
%Therefore \(\pi_{sp, \cdot}\) loses its theoretical guarantee to be preserved and primarily converges to 0 under the sparsity constraints. Note that 
In other words, by plugging \(\phi = 1 - \frac{1}{2p^\eta}\) into (\ref{3.2 eq: mixture training bound}), we can minimize the gap between training and test error bound, which guarantees the improved OOD generalization.

% after the convergence of \(\boldsymbol{\pi}\).

\section{Debiased Contrastive Weight Pruning}
\label{sec: DCWP}

Our theoretical observations elucidate the importance of balancing between the bias-aligned and bias-conflicting samples in discovering the optimal unbiased subnetworks structure. {While the true analytical form of the debiasing distribution is unknown in practice, we aim to approximate such unknown distribution with existing bias-conflicting samples and simulate the mixture distribution \(P^\eta_{mix}\) with modifying sampling strategy. To this end, we propose a Debiased Contrastive Weight Pruning (DCWP) algorithms that learn the unbiased subnetworks structure from the original full-size network.% by identifying and exploiting a small set of existing bias-conflicting training samples.
} 
%While it naturally motivates us to gather more bias-conflicting samples, such augmentation requires expensive data-collection and group annotations.

Consider a \(L\) layer neural networks as a function \(f_{\boldsymbol{W}}: \mathcal{X} \rightarrow \mathbb{R}^C\) parameterized by weights \(\boldsymbol{W} = \{ \boldsymbol{W}_1, \dots, \boldsymbol{W}_L \} \), where \(C=|\mathcal{Y}|\) is the number of classes. Analogous to the earlier works on pruning, we introduce binary weight pruning masks \(\boldsymbol{m} = \{ \boldsymbol{m}_1, \dots, \boldsymbol{m}_L \} \) to model the subnetworks as \(f(\cdot; \boldsymbol{m}_1 \odot \boldsymbol{W}_1, \dots, \boldsymbol{m}_L \odot \boldsymbol{W}_L)\). We denote such subnetworks as \(f_{\boldsymbol{m} \odot \boldsymbol{W}}\) for the notational simplicity. We treat each entry of \(\boldsymbol{m}_l\) as an independent Bernoulli variable, and model their logits as our new pruning parameters \(\Thetab = \{\Thetab_1, \dots, \Thetab_L\}\) where \(\Thetab_l \in \mathbb{R}^{n_l}\) and \(n_l\) represents the dimensionality of the \(l\)-th layer weights \(\boldsymbol{W}_l\). Then \(\pi_{l, i} = \sigmoid(\Theta_{l, i})\) denotes the probability of preserving the \(i\)-th weight of \(l\)-th layer \(\boldsymbol{W}_{l, i}\) where \(\sigmoid\) refers to a sigmoid function. To enable the end-to-end training, the Gumbel-softmax trick \cite{jang2016categorical} for sampling masks together with \(\ell_1\) regularization term of \(\Thetab\) is adopted as a sparsity constraint. With a slight abuse of notations, \(\boldsymbol{m} \sim G(\Thetab)\) denotes a set of masks sampled with logits \(\Thetab\) by applying Gumbel-softmax trick.

Then our main optimization problem is defined as follows:
\begin{equation}
\label{4 eq: new optimization problem}
\min_{\Thetab} \ell_{debias} \Big( \{(\boldsymbol{x}_i, y_i)\}_{i=1}^{|S|} ; \boldsymbol{\tilde{W}}, \Thetab \Big) + \lambda_{\ell_1} \sum_{l, i}  |\Theta_{l, i}|,
\end{equation}
where \(S\) denotes the index set of whole training samples, \(\lambda_{\ell_1} > 0\) is a Lagrangian multiplier, \(\boldsymbol{\tilde{W}}\) represents the pretrained weights and \(\ell_{debias}\) is our main objective which will be illustrated later. Note that we freeze the pretrained weights \(\boldsymbol{\tilde{W}}\) during training pruning parameters \(\Theta\). We interchangeably use \(\ell_{debias} \Big( \{(\boldsymbol{x}_i, y_i)\}_{i=1}^{|S|} ; \Theta \Big)\) and \(\ell_{debias} \big(S ;\Theta \big)\) in the rest of the paper. For comparison with our formulation, we recast the optimization problem of \cite{zhang2021can} with our notations as follows:

\begin{equation}
\label{4 eq: MRM}
\min_{\Thetab} \ell \Big( \{(\boldsymbol{x}_i, y_i)\}_{i=1}^{|S|} ; \boldsymbol{\tilde{W}}, \Thetab \Big) + \lambda_{\ell_1} \sum_{l, i}  |\Theta_{l, i}|,
\end{equation}
where \cite{zhang2021can} uses the cross entropy (CE) loss function for \(\ell\).

\textbf{Bias-conflicting sample mining} In the first stage, we identify bias-conflicting training samples which empower functional modular probing. Specifically, we train a bias-capturing model and treat an error set \(S_{bc}\) of the index of misclassified training samples as bias-conflicting sample proxies. Our framework is broadly compatible with various bias-capturing models, where we mainly leverage the ERM model trained with generalized cross entropy (GCE) loss \cite{zhang2018generalized}:  

\begin{equation}
\label{4.1 eq: GCE}
\ell_{GCE}(x_i, y_i; \boldsymbol{W}_{B}) =
\frac{1-p_{y_i}(x_i; \boldsymbol{W}_{B})^q}{q},
\end{equation}
{where \(q \in (0, 1]\) is a hyperparameter controlling the degree of bias amplification, \(\boldsymbol{W}_{B}\) is the parameters of the bias-capturing model, and \(p_{y_i}(x_i; \boldsymbol{W}_{B})\) is a softmax output value of the bias-capturing model assigned to the target label \(y_i\).} Compared to the CE loss, the gradient of the GCE loss up-weights the samples with a high probability of predicting the correct target, amplifying the network bias by putting more emphasis on easy-to-predict samples \cite{nam2020learning}.

To preclude the possibility that the generalization performance of DCWP is highly dependent on the behavior of the bias-capturing model, we demonstrate in Section \ref{sec: results} that DCWP is reasonably robust to the degradation of accuracy on capturing bias-conflicting samples. Details about the bias-capturing model and simulation settings are presented in the supplementary material.

% evaluate generalization performance of DCWP combined with various bias-conflicting sample mining algorithms including (1) 
\textbf{Upweighting Bias-conflicting samples}
After mining the index set of bias-conflicting sample proxies \(S_{bc}\), we treat \(S_{ba}=S \setminus S_{bc}\) as the index set of majority bias-aligned samples. Then we calculate the weighted cross entropy (WCE) loss \(\ell_{WCE}\big( \{x_i, y_i\}_{i=1}^{|S|}; \boldsymbol{\tilde{W}}, \Theta \big)\) as follows:
\begin{equation}
\begin{split}
\label{4.1 eq: weighted cross entropy}
\ell_{WCE}\Big( S; \boldsymbol{\tilde{W}}, \Theta \Big):=\mathbb{E}_{\boldsymbol{m} \sim G(\Theta)} \big[ &\lambda_{up} \ell_{bc}(S_{bc}; \boldsymbol{m}, \boldsymbol{\tilde{W}}) + \\
&\ell_{ba}(S_{ba}; \boldsymbol{m}, \boldsymbol{\tilde{W}}) \big],
\end{split}
\end{equation}
where \(\lambda_{up} \geq 1\) is an upweighting hyperparameter, and
\begin{equation}
\ell_{bc}(S_{bc}; \boldsymbol{m}, \boldsymbol{\tilde{W}}) = \frac{1}{|S_{bc}|} \sum_{i \in S_{bc}} \ell_{CE}(x_i, y_i; \boldsymbol{m} \odot \boldsymbol{\tilde{W}}),
\end{equation}
where $\ell_{CE}$ denotes the cross entropy loss. $\ell_{ba}$ is defined as similar to $\ell_{bc}$.

The expectation is approximated with Monte Carlo estimates, where the number of mask \(\boldsymbol{m}\) sampled per iteration is set to 1 in practice. To implement (\ref{4.1 eq: weighted cross entropy}), we oversample the samples in \(S_{bc}\) for \(\lambda_{up}\) times more than the samples in \(S_{ba}\). This sampling strategy is aimed at increasing the mixture weight \(\phi\) of the proposed mixture distribution \(P_{mix}^\eta\) in (\ref{3.2 eq: mixture distribution}), while we empirically approximate the unknown bias-conflicting group distribution with the sample set \(S_{bc}\). 

Note that although simple oversampling of bias-conflicting samples may not lead to the OOD generalization due to the inductive bias towards memorizing a few counterexamples in overparameterized neural networks \cite{sagawa2020investigation}, such failure is unlikely reproduced in learning \textit{pruning} parameters under the strong sparsity constraint. {We sample new weight masks $\mb$ for each training iteration in a stochastic manner}, effectively precluding the overparameterized networks from potentially memorizing the minority samples. As a result, DCWP exhibits reasonable performance even with few bias-conflicting samples.

\textbf{Bridging the alignment gap by pruning} To fully utilize the bias-conflicting samples, we consider the sample-wise relation between bias-conflicting samples and majority bias-aligned samples. Zhang et al. \cite{zhang2022correct} demonstrates that the deteriorated OOD generalization is potentially attributed to the distance gap between same-class representations; bias-aligned representations are more closely aligned than bias-conflicting representations, although they are generated from the same-class samples. We hypothesized that well-designed pruning masks could alleviate such geometrical misalignment. Specifically, ideal weight sparsification may guide each latent dimension to be independent of spurious attributes, thereby preventing representations from being misaligned with spuriously correlated latent dimensions. This motivates us to explore pruning masks by contrastive learning. (Related illustrative example in appendix)

Following the conventional notations of contrastive learning, we denote \(f^{enc}_{\boldsymbol{W}}: \mathcal{X} \rightarrow \mathbb{R}^{n_{L-1}}\) as an encoder parameterized by \(\boldsymbol{W} = (\boldsymbol{W}_1, \dots, \boldsymbol{W}_{L-1})\) which maps samples into the representations at penultimate layer. Let \(f^{cls}_{\boldsymbol{W}_{L}}: \mathbb{R}^{n_{L}} \rightarrow \mathbb{R}^C\) be the classification layer parameterized by \(\boldsymbol{W}_{L}\). Then \(f_{\boldsymbol{W}}(\boldsymbol{x}) = f^{cls}_{\boldsymbol{W}_L}(f^{enc}_{\boldsymbol{W}}(\boldsymbol{x})), \forall \boldsymbol{x} \in \mathcal{X}\). We similarly define \(f^{enc}_{\boldsymbol{m} \odot \boldsymbol{W}}\) and \(f^{cls}_{\boldsymbol{m}_L \odot \boldsymbol{W}_L}\). For the \(i\)-th sample \(\boldsymbol{x}_i\), let \(\boldsymbol{z}_i(\boldsymbol{W}) = \norm(f^{enc}_{\boldsymbol{W}}(\boldsymbol{x}_i))\) be the normalized representations lies on the unit hypersphere, and similarly define \(\boldsymbol{z}_i(\boldsymbol{m} \odot \boldsymbol{W})\). We did not consider projection networks \cite{chen2020simple, khosla2020supervised} for architectural simplicity. Given index subsets of training samples \(\mathcal{V}, \mathcal{V}^+\), the supervised contrastive loss \cite{khosla2020supervised} function is defined as follows: 
\begin{equation}
\begin{split}
    \label{4.1 eq: supcon}
    &\ell_{con}(\mathcal{V}, \mathcal{V}^+; \textbf{W}) = \\
    &\sum_{i \in \mathcal{V}} \frac{-1}{|\mathcal{V}^+(y_i)|} \sum_{j \in \mathcal{V}^+(y_i)} \log \frac{\exp\big(\boldsymbol{z}_i(\textbf{W}) \cdot \boldsymbol{z}_j(\textbf{W}) / \tau \big)}{\sum_{a}
    \exp\big(\boldsymbol{z}_i(\textbf{W}) \cdot \boldsymbol{z}_a(\textbf{W}) / \tau\big) },
\end{split}
\end{equation}
where \(a \in \mathcal{V} \setminus \{i\}\), \(\tau > 0\) is a temperature hyperparameter, and \(\mathcal{V}^+(y_i) = \{ k \in \mathcal{V}^+: y_k = y_i, k \neq i \}\) indicates the index set of samples with target label  \(y_i\). Then, we define the debiased alignment loss as follows:
\begin{equation}
\begin{split}
\label{4.1 eq: debiased alignment loss}
&\ell_{align}\Big( \{x_i, y_i\}_{i=1}^{|S|}; \boldsymbol{\tilde{W}}, \Theta \Big) = \mathbb{E}_{\boldsymbol{m} \sim G(\Theta)}\Big[ \\
&\ell_{con}(S_{bc}, S; \boldsymbol{m} \odot \boldsymbol{\tilde{W}}) + 
\ell_{con}(S_{ba}, S_{bc}; \boldsymbol{m} \odot \boldsymbol{\tilde{W}}) \Big],
\end{split}
\end{equation}
where the expectation is approximated with Monte Carlo estimates as in (\ref{4.1 eq: weighted cross entropy}). Intuitively, (\ref{4.1 eq: debiased alignment loss}) reduces the gap between bias-conflicting samples and others (first term), while preventing bias-aligned samples from being aligned too close each other (second term, more discussions in appendix). 
%Thus we leverage abundant bias-aligned samples as negatives regardless of their class in second term, while \cite{zhang2022correct} limits the negatives as samples with different target label but same bias label, which are often highly scarce in biased dataset.

Finally, our debiased loss in (\ref{4 eq: new optimization problem}) is defined as follows:
\begin{equation}
\begin{split}
\label{4.1 eq: debias loss}
\ell_{debias}\Big( S; \boldsymbol{\tilde{W}}, \Theta \Big) = &\ell_{WCE}\Big( S; \boldsymbol{\tilde{W}}, \Theta \Big) + \\
&\lambda_{align} \ell_{align}\Big( S; \boldsymbol{\tilde{W}}, \Theta \Big),
\end{split}
\end{equation}
where \(\lambda_{align}>0\) is a balancing hyperparameter. 

\textbf{Fine-tuning after pruning} After solving (\ref{4 eq: new optimization problem}) by gradient-descent optimization, we can obtain the pruning parameters  \(\Thetab^{*}\). This allows us to uncover the structure of unbiased subnetworks with binary weight masks \(\boldsymbol{m}^* = \{\boldsymbol{m}_1^*, \dots, \boldsymbol{m}_L^*\}\), where \(\boldsymbol{m}^*_l = \{ \mathbbm{1}(\sigmoid(\Theta^*_{l, i}) > 1/2) \given[] 1 \leq i \leq n_l \}, \forall l \in \{1, \dots, L\}\), and \(n_l\) is a dimensionality of the \(l\)-th weight. After pruning, we finetune the survived weights \(\boldsymbol{\hat{W}} =  \boldsymbol{m}^* \odot \boldsymbol{\tilde{W}}\)  using \(\ell_{WCE}\) in (\ref{4.1 eq: weighted cross entropy}) and \(\lambda_{align} \ell_{align}\) in (\ref{4.1 eq: debiased alignment loss}).
Interestingly, we empirically found that the proposed approach works well without the reset \cite{frankle2018lottery} (Related experiments in Section \ref{sec: results}). Accordingly, we resume the training while fixing the unpruned pretrained weights. The pseudo-code of DCWP is provided in Algorithm 1.

\begin{algorithm}
\label{pseudocode}
	\caption{Debiased Contrastive Weight Pruning (DCWP)} 
	\begin{algorithmic}[1]
	    \State {\bfseries Input:} Dataset \(D=\{(x_i, y_i)_{i=1}^{|S|}\}\), pruning parameters \(\Theta\), Training iterations \(T_1, T_2, T_3\).
        \State {\bfseries Output:} Trained pruning parameters \(\Theta^*\) and finetuned weights \(\boldsymbol{W}^*\) \\
        
        \State \textbf{Stage 1.} \textit{Mining debiased samples}
            \State Update the weights of bias-capturing network \(\boldsymbol{W}_b\) on \(D\) for \(T_1\) iterations.
            \State Identify \(S_{bc}\) and \(S_{ba}\). \\
        
        \State \textbf{Stage 2.} \textit{Debiased Contrastive Weight Pruning}
        
        \State Pretrain the main network on \(D\). Denote the pretrained weights as \(\boldsymbol{\tilde{W}}\).
        
		\For {$t=1$ \textbf{to} $T_2$}
		    \State Update \(\Theta\) with \(\ell_{debias} \Big(S; \boldsymbol{\tilde{W}}, \Theta \Big) + \lambda_{\ell_1} \sum_{l, i}  |\Theta_{l, i}|\) as in (\ref{4 eq: new optimization problem}). 
		\EndFor
		
        \State Prune out weight as \(\boldsymbol{\hat{W}} = \boldsymbol{\tilde{W}} \odot \mathbbm{1}(\Theta^* > 0)\).
		\State Update \(\boldsymbol{\hat{W}}\) with \(\ell_{WCE}\) and \(\lambda_{align} \ell_{align}\) on D for \(T_3\) iterations.
		
	\end{algorithmic} 
\end{algorithm}

%For the finetuning phase, recent advanced pruning algorithms \cite{frankle2018lottery, frankle2020linear} reset the values of unpruned weight to its initial ones from the original network \(\boldsymbol{W}_0\). 

\begin{table}[htbp]
\caption{Unbiased test accuracy evaluated on CMNIST, CIFAR10-C and bias-conflict test accuracy evaluated on BFFHQ. Models requiring supervisions on dataset bias are denoted with \cmark, while others are denoted with \xmark. Results are averaged on 4 different random seeds.} 
\label{5. table: full}
\centering
\resizebox{0.5\textwidth}{!}{% <------ Don't forget this %
\begin{tabular}{c c c c c c c c c}
\toprule
\multirow{2}{*}{Dataset} & \multirow{2}{*}{Ratio (\%)} & {ERM} & {EnD} & {Rebias} & {MRM} & {LfF} & {DisEnt} & {DCWP} \\
\cmidrule{3-9}
{} & {} & {\xmark} & {\cmark} & {\cmark} & {\xmark} & {\xmark} & {\xmark} & {\xmark} \\ 
\midrule
\multirow{4}{*}{CMNIST} & {0.5} & {62.36} & {84.32} & {69.12} & {60.98} & {83.73} & {86.74} & {\textbf{93.41}}   \\
                    {} & {1.0} & {81.73} & {94.98} & {84.65} & {80.42} & {88.44} & {93.15} & {\textbf{95.98}} \\
                    {} & {2.0} & {89.33} & {97.01} & {91.96} & {89.31} & {92.67} & {95.15} & {\textbf{97.16}} \\
                    {} & {5.0} & {95.22} & {98.00} & {96.74} & {95.23} & {94.90} & {96.76} & {\textbf{98.02}} \\
\midrule
\multirow{4}{*}{CIFAR10-C} & {0.5} & {22.02} & {23.93} & {21.73} & {23.92} & {27.02} & {27.86} & {\textbf{35.90}}   \\
                    {} & {1.0} & {28.00} & {27.61} & {28.09} & {27.77} & {31.44} & {34.62} & {\textbf{41.56}} \\
                    {} & {2.0} & {34.63} & {36.62} & {35.57} & {33.53} & {38.49} & {41.95} & {\textbf{49.01}} \\
                    {} & {5.0} & {45.66} & {43.67} & {48.22} & {47.00} & {46.16} & {49.15} & {\textbf{56.17}} \\
\midrule
\multirow{1}{*}{BFFHQ} & {0.5} & {52.25} & {59.80} & {54.90} & {54.75} & {56.50} & {55.50} & {\textbf{60.35}}   \\
\bottomrule
\end{tabular}
}
\end{table}

\begin{table}[htbp]
\caption{Worst-group and average test accuracies on CelebA (Blonde). (\cmark, \xmark) here represents $\texttt{Idx}=(6, 4)$ (w/ and w/o pruning) in Table \ref{5. table: ablation}, respectively, which shows the impacts of pruning.} 
\label{re: celeba}
\centering
\resizebox{0.42\textwidth}{!}{
\begin{tabular}{c c c c c c}
\toprule
{Models} & {ERM} & {DisEnt} & {JTT \cite{liu2021just}} & {DCWP (\xmark)} & {DCWP (\cmark)} \\
\midrule
{Worst-group} & {47.02} & {65.26} & {76.80} & {67.85} & {\textbf{79.30}} \\
\midrule
{Average} & {\textbf{97.80}} & {67.88} & {93.98} & {95.89} & {94.50} \\
\bottomrule
\end{tabular}
}
\end{table}

\section{Experimental results}
\label{sec: results}

\subsection{Methods}
\noindent\textbf{Datasets} To show the effectiveness of the proposed pruning algorithms, we evaluate the generalization performance of several debiasing approaches on Colored MNIST (CMNIST), Corrupted CIFAR-10 (CIFAR10-C), Biased FFHQ (BFFHQ) with varying ratio of bias-conflicting samples, i.e., bias ratio. We report unbiased accuracy \cite{nam2020learning, lee2021learning} on the test set, which includes a balanced number of samples from each data group. We also report bias-conflict accuracy for some experiments, which is the average accuracy on bias-conflicting samples included in an unbiased test set. Specifically, we report the bias-conflict accuracy on BFFHQ in which half of the unbiased test samples are bias-aligned, while the model with the best-unbiased accuracy is selected (Unbiased accuracy in Table \ref{5. table: robust}). For CelebA (blonde) \cite{sagawa2019distributionally, hong2021unbiased}, we report worst-group and average accuracy following \cite{sagawa2019distributionally} considering that abundant samples are included in (\texttt{Blonde Hair=0}, \texttt{Male=0}) bias-conflicting group. We use the same data splits from \cite{hong2021unbiased}.
%The results of CelebA (blonde) are reported in the appendix.

% \((y, a)\), where \(y \in \mathcal{Y}\) and \(a \in \mathcal{A}\) represent target and input spurious attribute, respectively. For example, there are \(|\mathcal{Y}| \times |\mathcal{A}|=100\) groups in CMNIST, where \(\mathcal{Y}\) and \(\mathcal{A}\) includes ten class of digits and colors, respectively. 

\noindent\textbf{Baselines} We compare DCWP with vanilla network trained by ERM, and the following state-of-the-art debiasing approaches: EnD \cite{tartaglione2021end}, Rebias \cite{bahng2020learning}, MRM \cite{zhang2021can}, LfF \cite{nam2020learning}, JTT \cite{liu2021just} and DisEnt \cite{lee2021learning}. EnD relies on the annotations on the spurious attribute of training samples, i.e., bias labels. Rebias relies on prior knowledge about the type of dataset bias (e.g., texture). MRM, LfF, JTT and DisEnt do not presume such bias labels or prior knowledge about dataset bias. Notably, MRM is closely related to DCWP where it probes the unbiased functional subnetwork with standard cross entropy. Details about other simulation settings are provided in Supplementary Material.

\subsection{Evaluation results}
As shown in Table \ref{5. table: full}, we found that DCWP outperforms other state-of-the-art debiasing methods by a large margin. Moreover, the catastrophic pitfalls of the existing pruning method become evident, where MRM fails to search for unbiased subnetworks. It underlines that the proposed approach for utilizing bias-conflicting samples plays a pivotal role in discovering unbiased subnetworks.

\subsection{Quantitative analyses}
\noindent\textbf{Ablation studies} To quantify the extent of performance improvement achieved by each introduced module, we analyzed the dependency of model performance on: (a) pruning out spurious weights following the trained parameters, (b) using alignment loss or (c) oversampling identified bias-conflicting samples when training \(\Thetab\) and \(\boldsymbol{\hat{W}}\). To emphasize the contribution of each module, we intentionally use an SGD optimizer which results in lower baseline accuracy (and for other CMNIST experiments in this subsection as well). Table \ref{5. table: ablation} shows that every module plays an important role in OOD generalization, while (a) pruning contributes significantly comparing (1$\rightarrow$2, \green{+7.19}$\%$), (3$\rightarrow$5, \green{+11.59}$\%$) or (4$\rightarrow$6, \green{+8.68}$\%$).

\begin{table}[htbp]
\caption{Ablation study on CMNIST (Bias ratio=1\(\%\)). Unbiased accuracy is reported. $\texttt{Idx}=2$ uses $\ell_{WCE}$ only for training pruning parameters $\Theta$ while using $\ell_{CE}$ for retraining. $\texttt{Idx}=3, 4$ does not conduct pruning and finetune the pretrained weights $\boldsymbol{\tilde{W}}$ by oversampling minorities or using alignment loss. } 
\label{5. table: ablation}
\centering
\resizebox{0.4\textwidth}{!}{% <------ Don't forget this %
\begin{tabular}{c c c c c c}
\toprule
{\texttt{Idx}} & {(a) Pruning} & {(b) \(\ell_{align}\)} & {(c) \(\ell_{WCE}\)} & {Accuracy (\%)} \\
\midrule
{1} & {-} & {-} & {-} & {43.10} \\
{2} & {\cmark} & {-} & {-} & {50.29} \\
{3} & {-} & {-} & {\cmark} & {73.20} \\
{4} & {-} & {\cmark} & {\cmark} & {79.28} \\
{5} & {\cmark} & {-} & {\cmark} & {84.79} \\
{6} & {\cmark} & {\cmark} & {\cmark} & {\textbf{87.96}} \\
\bottomrule
\end{tabular}
}
\end{table}

\noindent\textbf{Dependency on bias-capturing models} To evaluate the reliability of DCWP, we compare different version of DCWP which does not rely on the dataset-tailored mining algorithms. We posit that early stopping \cite{liu2021just} is an easy plug-and-play method to train the bias-capturing model in general. Thus we newly train DCWP\(_{ERM}\) which collects bias-conflicting samples by using the early-stopped ERM model. Table \ref{5. table: robust} shows that DCWP\(_{ERM}\) outperforms other baselines even though the precision, the fraction of samples in \(S_{bc}\) that are indeed bias-conflicting, or recall, the fraction of the bias-conflicting samples that are included in \(S_{bc}\), were significantly dropped. It implies that DCWP may perform reasonably well with the limited number and quality of bias-conflicting samples.

% {500, 1000, 1500} for simplicity and fully show the robustness, l2 reg or lr is remained same.

\begin{table}[htbp]
\caption{Robustness dependency of DCWP on the performance of bias-capturing models. We set the bias ratio as 1$\%$ for CIFAR10-C. Results are averaged on 4 different random seeds.} 
\label{5. table: robust}
\centering
\resizebox{0.5\textwidth}{!}{% <------ Don't forget this %
\begin{tabular}{c c c c c c c}
\toprule
\multirow{2}{*}{Dataset} & \multirow{2}{*}{Model} & \multicolumn{3}{c}{Accuracy} & \multicolumn{2}{c}{Mining metrics} \\
\cmidrule{3-7}
{} & {} & {bias-align} & {bias-conflict} & {unbiased} & {precision} & {recall} \\ 
\midrule
\multirow{3}{*}{CIFAR10-C} & {DisEnt} & {80.04} & {26.51} & {34.62} & {-} & {-} \\
                    {} & {DCWP\(_{ERM}\)} & {\textbf{94.33}} & {\underline{29.75}} & {\underline{36.21}} & {19.71} & {79.53} \\
                    {} & {DCWP} & {\underline{91.68}} & {\textbf{35.99}} & {\textbf{41.56}} & {85.97} & {74.89} \\
\midrule
\multirow{4}{*}{BFFHQ} & {DisEnt} & {89.80} & {55.55} & {72.68} & {-} & {-} \\
                    {} & {LfF} & {96.05} & {56.50} & {76.30} & {-} & {-} \\
                    {} & {DCWP\(_{ERM}\)} & {\textbf{99.45}} & {\underline{56.90}} & {\underline{78.20}} & {20.18} & {28.39} \\
                    {} & {DCWP} & {\underline{98.85}} & {\textbf{60.35}} & {\textbf{79.60}} & {30.61} & {31.25} \\
\bottomrule
\end{tabular}
}
\end{table}

\begin{figure}[!hbt]
\centering
\begin{subfigure}[c]{0.3\textwidth}
\includegraphics[width=\textwidth]{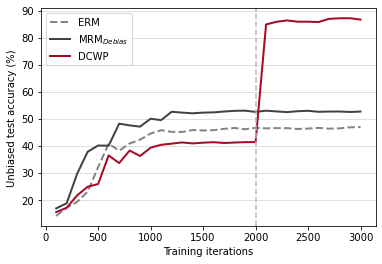} 
\caption{Weight reset}\label{fig: reinit}
\end{subfigure}
\begin{subfigure}[c]{0.35\textwidth}
\includegraphics[width=\textwidth]{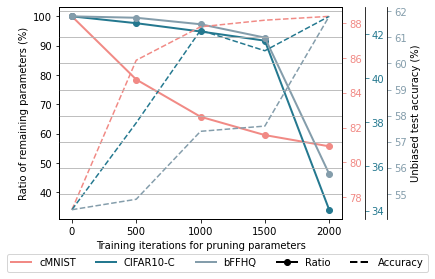}
\caption{Training iterations for \(\Theta\)}\label{fig: pruning ratio}
\end{subfigure}
\caption{(\textbf{a}) Comparison study on finetuning and weight resetting (CMNIST, bias ratio=\(1\%\)). For DCWP, after pretraining weights for 2000 iterations, we pause and start training pruning parameters (vertical dotted line in the figure). After convergence, we mask out and finetune weights for another 1000 iterations. For MRM\(_{debias}\), we reset the unpruned weight to its initialization and retrain for 3000 iterations. (\textbf{b}) Sensitivity analysis on the training iterations for pruning parameter \(\Theta\). Bias ratio=1\(\%\) for both CMNIST and CIFAR10-C. Bias-conflict accuracy is reported for BFFHQ.} \label{fig: analyses}
\end{figure}

\textbf{Do we need to reset weights?} While it becomes widespread wisdom that remaining weights should be reset to their initial ones from the original network after pruning \cite{frankle2018lottery}, we analyze whether such reset is also required for the proposed pruning framework. We compared the training dynamics of different models such as: (1) ERM model, (2) MRM\(_{debias}\) which solves (\ref{4 eq: new optimization problem}) instead of (\ref{4 eq: MRM}) to obtain the weight pruning masks, and (3) DCWP. Note that MRM\(_{debias}\) reset the unpruned weights to its initialization after pruning. Figure \ref{fig: reinit} shows that although MRM\(_{debias}\) makes a considerable advance, weight reset inevitably limits the performance gain. Moreover, finetuning the biased model significantly improves the generalization performance within only a few iterations, which implies that the proposed neural pruning can further boost the accuracy without weight reset. This finding allows us to debias large-scale pretrained models \textit{without} retraining by simple pruning and finetuning.

\begin{figure}[t]
\centering
\includegraphics[width=0.47\textwidth]{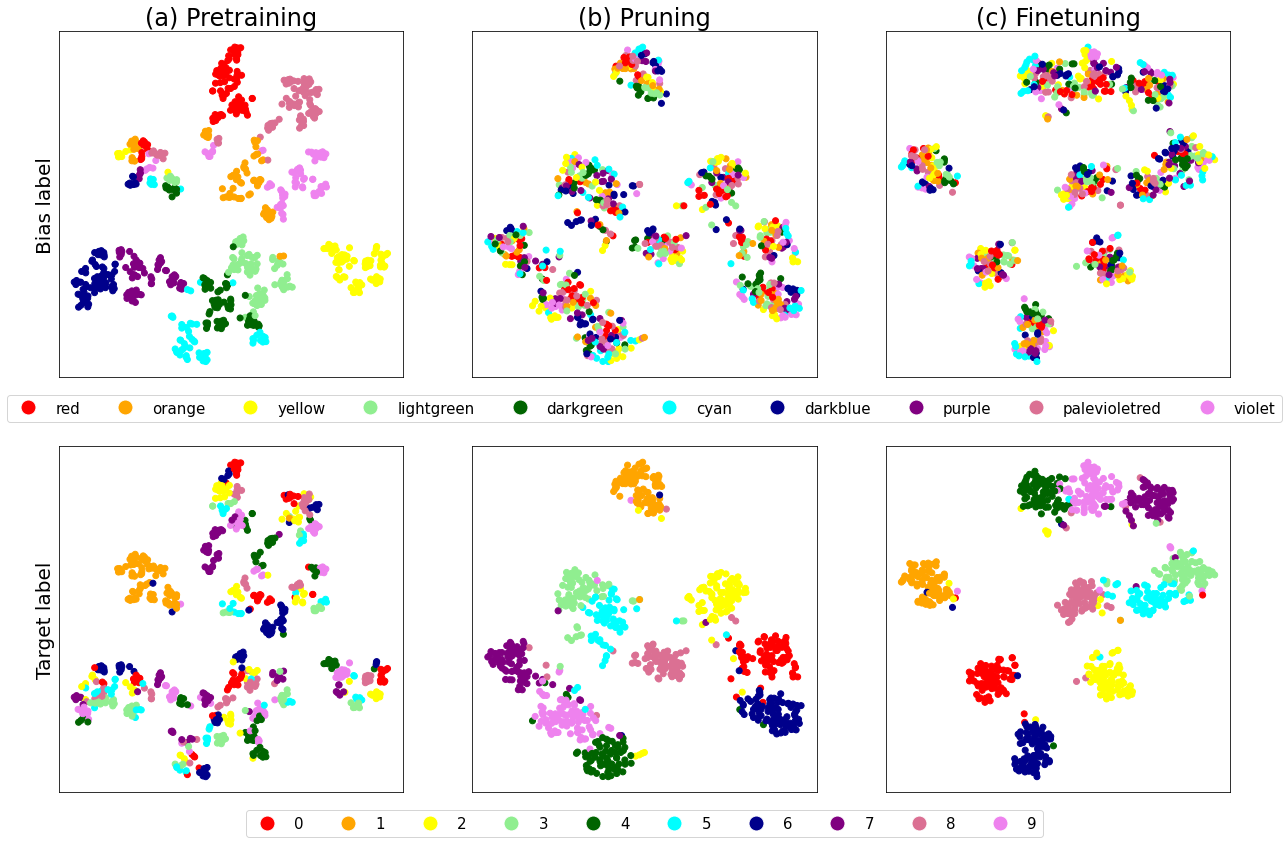} 
\caption{t-SNE visualization of representations encoded from unbiased test samples after (\textbf{a}) pretraining, (\textbf{b}) pruning and (\textbf{c}) finetuning (CMNIST, bias ratio=0.5$\%$). Each point is painted following its label (i.e., bias label in first row, and target label in second row).}
\label{fig: tsne}
\end{figure}

\textbf{Sensitivity analysis on training iterations} We also analyzed the hyperparameter sensitivity on the training iterations of the pruning parameter \(\Thetab\). The unbiased test accuracy is evaluated with weight pruning masks generated by \(\Thetab\) trained for \(\{500, 1000, 1500, 2000\}\) iterations on each dataset. Figure \ref{fig: pruning ratio} shows that the accuracy increases as more (potentially biased) weights are pruned out. It implies that the proposed method can compress the networks to a substantial extent while significantly improving the OOD generalization performance.

\textbf{Visualization of learned latent representations.} We visualized latent representations of unbiased
test samples in CMNIST after (a) pretraining, (b) pruning, and (c) finetuning. Note that we did
not reset or finetune the weights in (b). As reported in Figure \ref{fig: tsne}, biased representations in (a) are misaligned along with bias labels as discussed in section 4. However, after pruning, the representations were well-aligned with respect to the class of digits even without modifying the values of pretrained weights. It implies that the geometrical misalignment of representations can be addressed
by pruning spurious weights while finetuning with $\ell_{debias}$ can further improve the generalizations.

\section{Conclusion} This paper presented a novel functional subnetwork probing method for OOD generalization. Our goal was to find a winning functional lottery ticket \cite{zhang2021can}, which can achieve better OOD performance compared to its counterpart full network, given a highly biased dataset in practice. We provided theoretical insights and empirical evidence to show that the minority samples provide an important clue for probing the optimal unbiased subnetworks. Simulations on various benchmark datasets demonstrated that our model significantly outperforms state-of-the-art debiasing methods. The proposed method is memory efficient and potentially compatible with many other debiasing methods.

\subsubsection*{Acknowledgments}
%This work was supported by the National Research Foundation of Korea (NRF) grant funded by the Korea government (MSIT) (NRF-2019M3E5D2A01066267, NRF-2020R1A2B5B03001980), Institute for Information \& Communications Technology Planning \& Evaluation (IITP) grant funded by the Korea government (No. 2017-0-00451), KAIST Key Research Institute (Interdisciplinary Research Group) Project and Field-oriented Technology Development Project for Customs Administration through National Research Foundation of Korea (NRF) funded by the Ministry of Science \& ICT and Korea Customs Service(**NRF-2021M3I1A1097938**).
This work was supported by the National Research Foundation of Korea (NRF) grant funded by the Korea government (MSIT) (NRF-2020R1A2B5B03001980), Institute for Information \& Communications Technology Planning \& Evaluation (IITP) grant funded by the Korea government (No. 2017-0-00451, No. RS-2023-00233251, System3 reinforcement learning with high-level brain functions), KAIST Key Research Institute (Interdisciplinary Research Group) Project and Field-oriented Technology Development Project for Customs Administration through National Research Foundation of Korea (NRF) funded by the Ministry of Science \& ICT and Korea Customs Service(**NRF-2021M3I1A1097938**).

% , while remaining fully compatible with many other debiasing methods
% potentially reduces memory consumption or computational costs

\clearpage
\onecolumn
%%%%%%%%% TITLE - PLEASE UPDATE
\noindent\makebox[\textwidth][c]{\Large\bfseries Appendix: Training Debiased Subnetworks with Contrastive Weight Pruning}

The supplementary material is organized as follows. We first present the proof for Theorem \ref{supple thm: training upper bound} and \ref{supple thm: mixture training bound}. In section \ref{sec: misalignment}, we extend the presented theoretical example in the main paper to illustrate the risks of geometrical misalignment of embeddings arising from strong spurious correlations. Additional results are reported in section \ref{sec: additional results}. Optimization setting, hyperparameter configuration, and other experimental details are provided in section \ref{sec: experimental setup}.

\section{Proofs}
\label{sec: proofs}

In this section, we present the detailed proofs for Theorems \ref{supple thm: training upper bound} and \ref{supple thm: mixture training bound} explained in the main paper, followed by an illustration about the dynamics of weight ratio $\alpha_i(t) = \tilde{w}_{sp, i}(t) / \tilde{w}_{inv}(t)$.

\subsection{Proof of Theorem 1}

\begin{customthm}{1}
\label{supple thm: training upper bound}
(Training and test bound) {Assume that \(p^e > {1} / {2}\) in the biased training environment \(e \in \mathcal{E}_{train}\). Define $\tilde{\boldsymbol{w}}(t)$ as weights pretrained for a finite time $t < T$. Then the upper bound of the error of training environment w.r.t. pruning parameters $\boldsymbol{\pi}$ is given as}: 

\begin{equation}
\label{supple eq: training upper bound}
\ell^e(\boldsymbol{\pi}) \leq 2\exp\bigg(-\frac{2\big(\pi_{inv} + (2p^e-1)\sum_{i=1}^{D}  \alpha_i(t) \pi_{sp, i} \big)^2}{4 \sum_{i=1}^D \alpha_i(t)^2 + 1}\bigg),
\end{equation}
{where the weight ratio \(\alpha_i(t) = {\tilde{w}_{sp, i}(t)} / {\tilde{w}_{inv}(t)}\) is bounded below some positive constant. Given a test environment \(e \in \mathcal{E}_{test}\) with \(p^e = \frac{1}{2}\), the upper bound of the error of test environment w.r.t. $\boldsymbol{\pi}$ is given as}:
\begin{equation}
\label{supple eq: test upper bound}    
\ell^e(\boldsymbol{\pi}) \leq 2\exp\Big(-\frac{2\pi_{inv}^2}{4 \sum_{i=1}^D \alpha_i(t)^2+1}\Big),
\end{equation}
which implies that there is an unavoidable gap between training bound and test bound.
\end{customthm}

\begin{proof}
%In (\ref{supple eq: loss}), we define 
We omit time $t$ in $\tilde{\boldsymbol{w}}(t)$ and $\alpha_i(t)$ for notational simplicity throughout the proof of Theorem \ref{supple thm: training upper bound} and \ref{supple thm: mixture training bound}.

We recall the loss function defined in the main paper for convenience. 
\begin{equation}
\label{supple eq: loss}
\begin{split}
  \ell^e(\boldsymbol{\pi}) &= \frac{1}{2} \mathbb{E}_{\Xb^e, Y^e, \boldsymbol{m}}[1-Y^e\hat{Y}^e] \\
  &= \frac{1}{2} \mathbb{E}_{\Xb^e, Y^e, \boldsymbol{m}} \left[1-Y^e \cdot \sgn\left( \boldsymbol{\tilde{w}}^T(\boldsymbol{X}^e \odot \boldsymbol{m})\right)\right],
\end{split}
\end{equation}
where \(\hat{Y}^e\) is the prediction of binary classifier, \(\boldsymbol{\tilde{w}}\) is the pretrained weight vector, \(\sgn(\cdot)\) represents the sign function, and \(\odot\) represents element-wise product.

The prediction from the classifier \(\hat{Y}^e\) is defined as 
\begin{equation}
\label{supple eq: yhat}
\begin{split}
    \hat{Y}^e &= \sgn\big( \boldsymbol{\tilde{w}}^T(\boldsymbol{X}^e \odot \boldsymbol{m})\big) \\
    &=\sgn\Big(\Oc^e\Big),
\end{split}
\end{equation}
where 
\begin{equation}
\label{supple eq: Oc}
\Oc^e:= \tilde{w}_{inv} m_{inv} Z_{inv}^e + \sum_{i=1}^D \tilde{w}_{sp, i} {m}_{sp, i} {Z}_{sp, i}^e.
\end{equation}

Assume that \(Y^e\) is uniformly distributed binary random variable. Then,
\begin{equation}
\label{supple eq: conditional loss}
\begin{split}
    \mathbb{E}_{\boldsymbol{X}^e, Y^e, \mb} [Y^e\hat{Y}^e] &=  \frac{1}{2}   \mathbb{E}_{\boldsymbol{X}^e, \mb} \Big[\hat{Y}^e | Y^e = 1 \Big]-\frac{1}{2}  \mathbb{E}_{\boldsymbol{X}^e, \mb} \Big[\hat{Y}^e | Y^e = -1 \Big],
\end{split}
\end{equation}
where
\begin{equation}
\label{supple eq: y=1 loss}
\begin{split}
        \mathbb{E}_{\boldsymbol{X}^e, \mb} \Big[\hat{Y}^e | Y^e = 1 \Big] &= \mathbb{E}_{\boldsymbol{X}^e, \mb} \Big[\sgn\Big( \Oc^e \Big) \given[\Big] Y^e = 1 \Big] \\
        &= P\big( \Oc^e >0 \given[\big] Y^e=1  \big) - P\big(\Oc^e<0 \given[\big] Y^e=1 \big) \\ 
        &= 1-2P \big( \Oc^e <0 \given[\big] Y^e=1 \big),
\end{split}
\end{equation}
and

\begin{equation}
\label{supple eq: y=-1 loss}
\begin{split}
        \mathbb{E}_{\boldsymbol{X}^e, \mb} \Big[\hat{Y}^e | Y^e = -1 \Big]% &= \mathbb{E}_{\boldsymbol{X}^e} \Big[\sgn\Big( \Oc^e \Big) \given[\Big] Y^e = 1 \Big] \\
        &= P\big( \Oc^e >0 \given[\big] Y^e=-1  \big) - P\big(\Oc^e<0 \given[\big] Y^e=-1 \big) \\ 
        &= -\mathbb{E}_{\boldsymbol{X}^e, \mb} \Big[\hat{Y}^e | Y^e = 1 \Big], % 2P \big( \Oc^e >0 \given[\big] Y^e=1 \big) - 1
\end{split}
\end{equation}
where we use \(P \big(  \Oc^e <0 \given[\big] Y^e=1 \big) = P \big( \Oc^e > 0 \given[\big] Y^e=-1 \big)\)
and \(P \big(  \Oc^e >0 \given[\big] Y^e=1 \big) = P \big( \Oc^e < 0 \given[\big] Y^e=-1 \big)\)
thanks to the symmetry.
Therefore, we have
%Then we recall the definition of our training error for the completeness:
\begin{equation}
\label{supple eq: training error}
\begin{split}
\ell^e(\boldsymbol{\pi}) %&= \frac{1}{2}\mathbb{E}_{\boldsymbol{X}^e, Y^e} \big[ \ell(\boldsymbol{X}^e, Y^e; \boldsymbol{\pi}) \big] \\    
    &=\frac{1}{2}\mathbb{E}_{\boldsymbol{X}^e, Y^e, \mb} [1 - Y^e\hat{Y}^e] \\
    &=\frac{1}{2} -\frac{1}{2}\mathbb{E}_{\boldsymbol{X}^e, \mb} \Big[\hat{Y}^e | Y^e = 1 \Big]\\
    &=P \big( \Oc^e <0 \given[\big] Y^e=1 \big).
\end{split}
\end{equation}
%where we omit the distribution of each variable in expectation for the notational simplicity.

{In order to derive a concentration inequality of $\ell^e(\pibold)$, we compute a conditional expectation as follows:
\begin{equation}
\label{supple eq: expectation of Z bar}
\begin{split}
    \mathbb{E}_{\boldsymbol{X}^e, \boldsymbol{m}} \big[ \Oc^e \given[\big] Y^e=1 \big] 
    &= \mathbb{E}_{\boldsymbol{X}^e, \boldsymbol{m}} \Big[ \tilde{w}_{inv} m_{inv} Z_{inv}^e + \sum_{i=1}^D \tilde{w}_{sp, i} m_{sp, i} Z_{sp, i}^e  \given[\Big] Y^e=1 \Big] \\
    &= \mathbb{E}_{\boldsymbol{X}^e, \boldsymbol{m}} \Big[ \tilde{w}_{inv} m_{inv} + 
             \sum_{i=1}^D \tilde{w}_{sp, i} m_{sp, i} Z_{sp, i}^e \given[\Big] Y^e=1 \Big] \\
    &= \tilde{w}_{inv}\pi_{inv} +  \mathbb{E}_{\boldsymbol{X}^e, \boldsymbol{m}} \bigg[\sum_{i=1}^D \tilde{w}_{sp, i} m_{sp, i} Z_{sp, i}^e \given[\Big] Y^e=1 \bigg] \\
    &= \tilde{w}_{inv} \pi_{inv} + \sum_{i=1}^D (2p^e-1) \tilde{w}_{sp, i} \pi_{sp, i},
\end{split}
\end{equation}
where the last equality follows from the independence of \(Z_{sp, \cdot}\) and \(m_{sp, \cdot}\) as assumed in the main paper. Then, 
\begin{equation}
\label{supple eq: naive hoeffding}
\begin{split}
    P\big( \Oc^e <0 \given[\big] Y^e=1 \big) 
    &= P\Big( \Oc^e - \mathbb{E}_{\boldsymbol{X}^e, \textbf{m}} \big[ \Oc^e \big] 
    < - \mathbb{E}_{\boldsymbol{X}^e, \boldsymbol{m}} \big[ \Oc^e \big]  \given[\big] Y^e=1 \Big) \\
    &\leq P\Big( \given[\Big] \Oc^e - \mathbb{E}_{\boldsymbol{X}^e, \boldsymbol{m}} \big[ \Oc^e \big] \given[\Big] > 
    \mathbb{E}_{\boldsymbol{X}^e, \boldsymbol{m}} \big[ \Oc^e \big] \given[\big] Y^e=1 \Big) \\
    &\leq 2\exp\bigg(-\frac{2 \mathbb{E}_{\boldsymbol{X}^e, \boldsymbol{m}} \big[ \Oc^e \given[\big] Y^e=1 \big]^2 }{ \tilde{w}_{inv}^2 + \sum_{i=1}^D 4\tilde{w}_{sp, i}^2}\bigg) \\
    &\leq 2\exp\bigg(-\frac{2\big( \tilde{w}_{inv} \pi_{inv} + \sum_{i=1}^D  (2p^e-1) \tilde{w}_{sp, i} \pi_{sp, i} \big)^2}{\tilde{w}_{inv}^2 + \sum_{i=1}^D 4\tilde{w}_{sp, i}^2}\bigg) \\
    &\leq 2\exp\bigg(-\frac{2\big( \pi_{inv} + \sum_{i=1}^D (2p^e-1) \alpha_{i} \pi_{sp, i} \big)^2}
    {1 + \sum_{i=1}^D 4\alpha_i^2}\bigg),
\end{split}
\end{equation}
where the second inequality is obtained using Hoeffding's inequality, third inequality is from (\ref{supple eq: expectation of Z bar}), and last inequality is obtained by dividing both denominator and numerator with $\tilde{w}_{inv}^2$. We use the definition of weight ratio $\alpha_i = \tilde{w}_{sp, i} / \tilde{w}_{inv}$. For the second inequality, we use that $\tilde{w}_{inv} m_{inv} Z^e_{inv} \in \{0, \tilde{w}_{inv}\}$ and $\tilde{w}_{sp, i} m_{sp, i} Z^e_{sp, i} \in \{-\tilde{w}_{sp, i}, 0, \tilde{w}_{sp, i}\} ~\forall i$ in (\ref{supple eq: Oc}) to obtain the denominator.}

Finally, the proof for the positivity of $\alpha_i(t)$ comes from Proposition 1 in section 1.3 in this appendix.
This concludes the proof.
\end{proof}

\subsection{Proof of Theorem 2}

\begin{customthm}{2}
\label{supple thm: mixture training bound}
(Training bound with the mixture distribution) Assume that the defined mixture distribution \(P_{mix}^\eta\) is biased, i.e., for all $i \in \{1, \dots, D\}$,
\begin{equation}
\label{3.2 eq: mixture distribution condition}
  P_{mix}^\eta({Z}_{sp, i}^\eta = -y \mid Y^e=y) \leq P_{mix}^\eta({Z}_{sp, i}^\eta = y \mid Y^\eta=y). 
\end{equation}
Then, \(\phi\) satisfies \(0 \leq \phi \leq 1-\frac{1}{2p^\eta}\). Then the upper bound of the error of training environment \(\eta\) w.r.t. the pruning parameters is given by
\begin{equation}
\begin{split}
\label{3.2 eq: mixture training bound}
\ell^\eta(\pibold) \leq 2\exp\left(-\frac{2(\pi_{inv} +(2p^\eta(1-\phi)-1)  \sum_{i=1}^D \alpha_i(t) \pi_{sp, i})^2}{4 \sum_{i=1}^D \alpha_{i}(t)^2 + 1}\right).   
\end{split}
\end{equation}
Furthermore, when \(\phi=1-\frac{1}{2p^\eta}\), the mixture distribution is perfectly debiased, and we have
\begin{equation}
\label{3.2 eq: mixture debiased training bound}
\ell^\eta(\pibold) \leq 2\exp\Big(-\frac{2\pi_{inv}^2}{4 \sum_{i=1}^D \alpha_i(t)^2+1}\Big),
\end{equation}
which is equivalent to the test bound in (\ref{supple eq: test upper bound}).
\end{customthm}

\begin{proof}

%We first analyze the range of mixture weight \(\phi\). Observe that

{Recall that $Z_{sp, i}^\eta$ follows the mixture distribution $P^\eta_{mix}$:
\begin{equation}
\label{supple eq: mixture distribution}
P_{mix}^{\eta}({Z}_{sp, i}^{\eta} \mid Y^\eta=y) = \phi P_{debias}^{\eta}({Z}_{sp, i}^{\eta} \mid Y^{\eta}=y) + (1-\phi) P_{bias}^{\eta}({Z}_{sp, i}^{\eta} \mid Y^{\eta}=y),
\end{equation}}

where 
\begin{equation}
\label{supple eq: mixture_debias}
P_{debias}^{\eta}({Z}_{sp, i}^{\eta}\mid Y^{\eta}=y)= 
\begin{cases}
    1, & \text{if } {Z}_{sp, i}^{\eta} = -y\\
    0, & \text{if } {Z}_{sp, i}^\eta = y 
\end{cases}
\end{equation} 
{is a debiasing distribution to weaken the correlation between \(Y^\eta\) and \({Z}_{sp, i}^\eta\) by setting the value of \(Z_{sp,i}^\eta\) as \(-Y^\eta\), and}
\begin{equation}
\label{supple eq: mixture_bias}
P_{bias}^\eta({Z}_{sp, i}^\eta\mid Y^\eta=y)= 
\begin{cases}
    p^\eta, & \text{if } {Z}_{sp, i}^\eta = y\\
    1-p^\eta, & \text{if } {Z}_{sp, i}^\eta = -y.
\end{cases}
\end{equation}

{Then, with definition in (\ref{supple eq: mixture_debias}) and (\ref{supple eq: mixture_bias}),
\begin{equation}
\label{supple eq: pmix}
\begin{split}
    P_{mix}({Z}_{sp, i}^\eta = -y | Y^\eta=y) &= \phi + (1-\phi)(1-p^\eta) \\
    P_{mix}({Z}_{sp, i}^\eta = y | Y^\eta=y) &= (1-\phi)p^\eta,
\end{split}
\end{equation}
for \(y \in \{-1, 1\}\). Then, based on the assumption, \(\phi+(1-\phi)(1-p^\eta) \leq (1-\phi)p^\eta\), which gives $\phi \leq 1-\frac{1}{2p^\eta}$. Specifically, if $\phi=1-\frac{1}{2p^\eta}$, it turns out that $P_{mix}({Z}_{sp, i}^\eta = -y | Y^\eta=y) = P_{mix}({Z}_{sp, i}^\eta = y | Y^\eta=y) = \frac{1}{2}$, which implies that spurious features turns out to be random and the mixture distribution becomes perfectly debiased. If $\phi = 0$, the mixture distribution boils down into a biased distribution as similarly defined in the environment $e \in \mathcal{E}_{train}.$}

The prediction from the classifier $\Oc^\eta$ is defined as similar to $\Oc^e$ in (\ref{supple eq: Oc}). Then in order to derive a concentration inequality of $\ell^\eta(\pibold)$, we derive a conditional expectation of \(\Oc^\eta\) as done in (\ref{supple eq: expectation of Z bar}):
\begin{equation}
\label{supple eq: mixture conditional expectation1}
\begin{split}
    \mathbb{E}_{\boldsymbol{X}^\eta, \boldsymbol{m}} \big[ \Oc^\eta \given[\big] Y^\eta = 1 \big] &= \mathbb{E}_{\boldsymbol{X}^\eta, \boldsymbol{m}} \Big[ \tilde{w}_{inv} m_{inv} Z_{inv}^\eta + \sum_{i=1}^D \tilde{w}_{sp, i} m_{sp, i} Z_{sp, i}^\eta  \given[\Big] Y^\eta = 1 \Big] \\
    &= \mathbb{E}_{\boldsymbol{X}^\eta, \boldsymbol{m}} \Big[ \tilde{w}_{inv} m_{inv} + \sum_{i=1}^D \tilde{w}_{sp, i} m_{sp, i} Z_{sp, i}^\eta  \given[\Big] Y^\eta=1 \Big].
\end{split}
\end{equation}
Then, with the definition in (\ref{supple eq: mixture distribution}), the second term in the above conditional expectation of (\ref{supple eq: mixture conditional expectation1}) is defined as follows: 
\begin{equation}
\label{supple eq: mixture conditional expectation2}
\begin{split}
    \mathbb{E}_{\boldsymbol{X}^\eta, \boldsymbol{m}} \Big[ \sum_{i=1}^D & \tilde{w}_{sp, i} m_{sp, i} Z_{sp, i}^\eta \given[] Y^\eta = 1 \Big] \\
     &= \sum_{i=1}^D \tilde{w}_{sp, i} \pi_{sp, i} \Big(\phi \mathbb{E}_{debias}[Z_{sp, i}^\eta \given[] Y^\eta = 1] + (1-\phi)\mathbb{E}_{bias}[Z_{sp, i}^\eta \given[] Y^\eta = 1] \Big) \\
     &= \sum_{i=1}^D \tilde{w}_{sp, i} \pi_{sp, i} \Big(  \phi \cdot (-1) + (1-\phi)(2p^\eta-1)  \Big) \\
     &= \sum_{i=1}^D \tilde{w}_{sp, i} \pi_{sp, i} \big(2p^\eta(1-\phi)-1\big),
\end{split}
\end{equation}
where \(\mathbb{E}_{debias}\) and \(\mathbb{E}_{bias}\) in the first equality denote the conditional expectation with respect to distribution \(P_{debias}^\eta\) and \(P_{bias}^\eta\) in (\ref{supple eq: mixture_debias}) and (\ref{supple eq: mixture_bias}), respectively. Plugging (\ref{supple eq: mixture conditional expectation2}) into (\ref{supple eq: mixture conditional expectation1}), we get
\begin{equation}
\label{supple eq: mixture conditional expectation3}
    \mathbb{E}_{\boldsymbol{X}^\eta, \textbf{m}} \big[ \Oc^\eta \given[\big] Y^\eta = 1 \big] = \tilde{w}_{inv} \pi_{inv} + \sum_{i=1}^D \big(2p^\eta(1-\phi)-1\big) \tilde{w}_{sp, i} \pi_{sp, i}.
\end{equation}
Then we can derive the upper bound of $\ell^\eta(\pibold) = P(\Oc^\eta < 0 \given Y^\eta = 1)$ similarly to (\ref{supple eq: naive hoeffding}):
\begin{equation}
\label{supple eq: mixture hoeffding}
\begin{split}
    P\big( \Oc^\eta < 0 \given[\big] Y^\eta=1 \big) &\leq P\Big( \given[\Big] \Oc^\eta - \mathbb{E}_{\boldsymbol{X}^\eta, \boldsymbol{m}} \big[ \Oc^\eta \big] \given[\Big] > \mathbb{E}_{\boldsymbol{X}^\eta, \boldsymbol{m}} \big[ \Oc^\eta \big]  \given[\big] Y^\eta=1 \Big) \\
    &\leq  2\exp\bigg(-\frac{2 \mathbb{E}_{\boldsymbol{X}^\eta, \boldsymbol{m}} \big[ \Oc^\eta \given[\big] Y^\eta = 1 \big]^2 }{\tilde{w}_{inv}^2 + 4 \sum_{i=1}^D \tilde{w}_{sp, i}^2}\bigg) \\
    &\leq 2\exp\bigg(-\frac{2 \big(\tilde{w}_{inv} \pi_{inv} + \sum_{i=1}^D \big(2p^\eta(1-\phi)-1\big) \tilde{w}_{sp, i} \pi_{sp, i} \big)^2 }{\tilde{w}_{inv}^2 + 4 \sum_{i=1}^D \tilde{w}_{sp, i}^2}\bigg) \\
    &\leq  2\exp\Big(-\frac{2 \big(\pi_{inv} + \sum_{i=1}^D   (2p^\eta(1-\phi)-1) \alpha_i \pi_{sp, i} \big)^2}{1 + \sum_{i=1}^D 4\alpha_{i}^2}\Big),
\end{split}
\end{equation}
where the first inequality is obtained by Hoeffding's inequality, and second inequality is from (\ref{supple eq: mixture conditional expectation3}). The denominator is obtained as same as in (\ref{supple eq: naive hoeffding}), since $\tilde{w}_{inv} m_{inv} Z^\eta_{inv} \in \{0, \tilde{w}_{inv}\}$ and $ \tilde{w}_{sp, i} m_{sp, i} Z^\eta_{sp, i} \in \{-\tilde{w}_{sp, i}, 0, \tilde{w}_{sp, i}\} ~\forall i$ as-is. If we plug-in the upper bound value of $\phi = 1 - \frac{1}{2p^\eta}$ obtained from (\ref{supple eq: pmix}) into (\ref{supple eq: mixture hoeffding}), it boils down into the test bound in (\ref{supple eq: test upper bound}).
\end{proof}

\subsection{Dynamics of the weight ratio}
\label{sec: weight ratio}

We omit an index of environment $e$ in the proposition below for notational simplicity.

\begin{proposition}
\label{supple prop: weight ratio} %\color{blue}
    Consider a binary classification problem of linear classifier $f_{\wb}$ under exponential loss. Let $(\Xb, Y) \sim P$, where each input random variable $\Xb$ and the corresponding label $Y$ is generated by 
    $$ \Xb = \begin{pmatrix} Z_{inv} \\ \Zb_{sp} \end{pmatrix}, 
            Y = Z_{inv}, $$ 
    where $\Zb_{sp} = (2\zb - 1)Z_{inv}$ for a random variable $\zb \in \{0, 1\}^D$ which is chosen from multivariate Bernoulli distribution ($z_i \sim Bern(p)$) with $p>\frac12$, i.e., $p$ denotes $p^e$ in the main paper.
    Let $\wb=\begin{pmatrix} w_{inv} \\ \wb_{sp} \end{pmatrix} \in \Rd^{D+1}$ be the weight of the linear classifier $f_{\wb}(\xb)=\wb^T\xb$. Assume that $0 < \winv(0)$, i.e., $\winv$ is initialized with a positive value, and $0 < w_{sp,i}(0) < \frac12 \log \frac{p}{1-p}$.   
    Then, after sufficient time of training, $\winv$ diverges to $+\infty$ and $w_{sp,i}$ converges to $\frac12 \log \frac{p}{1-p}$, which means
    $\alpha_i := \frac{w_{sp,i}}{\winv}$ converges to $0$ for all $i \in \{1,2,\cdots,D\}$. More precisely,
    $$ 
    \log \left( e^{\winv(0)} + [4p(1-p)]^{\frac{D}{2}} t \right) \le \winv(t) \le 
    \log \left( e^{w_{inv}(0)} + t \prod_{i=1}^D \left( pe^{-w_{sp,i}(0)}+ \sqrt{p(1-p)}\right) \right).
    $$
    However, for a fixed $t < T$, each $\alpha_i$ is positive and its lower bound converges to some positive value.
\end{proposition}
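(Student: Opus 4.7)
My plan is to analyze the gradient flow of the linear classifier $f_{\wb}$ under exponential loss $\ell(\xb, y;\wb) = \exp(-y\,\wb^T\xb)$. Since $Y Z_{inv}=1$ almost surely by the data model, setting $\xi_i := Y Z_{sp,i}$ yields i.i.d.\ signs with $\Pr(\xi_i = 1)=p$, and $Y\wb^T\Xb = \winv + \sum_i w_{sp,i}\xi_i$. Using independence of the $\xi_i$, the gradient flow equations reduce to
\begin{equation*}
\dot{\winv}(t) = e^{-\winv}\prod_{i=1}^D g(w_{sp,i}), \qquad \dot{w}_{sp,j}(t) = e^{-\winv}\, h(w_{sp,j}) \prod_{i\neq j} g(w_{sp,i}),
\end{equation*}
where $g(w) := p e^{-w} + (1-p) e^{w}$ and $h(w) := p e^{-w} - (1-p) e^{w}$. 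These ODEs are the working tool for everything that follows.

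The first step is to trap each $w_{sp,j}$. Since $g>0$ everywhere and $h(w)>0$ iff $w < w^* := \tfrac12\log\tfrac{p}{1-p}$, the hypothesis $0<w_{sp,j}(0)<w^*$ forces $\dot w_{sp,j}>0$ as long as $w_{sp,j}<w^*$, so each $w_{sp,j}$ is monotone increasing, trapped in $[w_{sp,j}(0), w^*]$, and converges to the unique stable fixed point $w^*$. The coordinate $\winv$ is also monotone increasing because $g>0$.

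For the sandwich bounds on $\winv$, the lower bound follows from AM--GM: $g(w)\ge 2\sqrt{p(1-p)}$ everywhere, so $e^{\winv}\dot\winv\ge[4p(1-p)]^{D/2}$, which integrates to $e^{\winv(t)} \ge e^{\winv(0)} + [4p(1-p)]^{D/2}\,t$ and in particular forces $\winv\to\infty$. For the upper bound I use the invariant $w_{sp,i}(t)\in[w_{sp,i}(0),w^*]$ from the previous paragraph: monotonicity of $e^{-w}$ gives $p e^{-w_{sp,i}(t)}\le p e^{-w_{sp,i}(0)}$, and $w_{sp,i}(t)\le w^*$ gives $(1-p)e^{w_{sp,i}(t)}\le(1-p)e^{w^*}=\sqrt{p(1-p)}$; summing the two yields $g(w_{sp,i}(t))\le p e^{-w_{sp,i}(0)}+\sqrt{p(1-p)}$. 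Plugging this into the ODE for $\winv$ and integrating delivers the matching upper bound $e^{\winv(t)} \le e^{\winv(0)} + t\prod_i(p e^{-w_{sp,i}(0)}+\sqrt{p(1-p)})$. Since the numerator of $\alpha_i(t) = w_{sp,i}(t)/\winv(t)$ stays in $[w_{sp,i}(0),w^*]$ while the denominator diverges, $\alpha_i(t)\to 0$ as $t\to\infty$; for any fixed $t<T$ the upper bound keeps $\winv(t)$ finite, so $\alpha_i(t)\ge w_{sp,i}(0)/\winv(t)>0$ is a strictly positive lower bound on $[0,T]$.

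The main obstacle I anticipate is rigorously establishing the invariant $w_{sp,j}(t)\le w^*$ \emph{for all} $t$, not merely near the initial time: the ODE for $w_{sp,j}$ couples to the other coordinates through the common factor $e^{-\winv}\prod_{i\neq j}g(w_{sp,i})$, so the sign argument needs to be cast as a genuine ODE invariant rather than treated as a punctual initial-condition observation. Once this invariant is secured, both sides of the sandwich and the integration follow cleanly, and the remainder is bookkeeping. This also closes the positivity statement about $\alpha_i$ that is invoked as a hypothesis in the proof of Theorem~\ref{3.2 thm: training upper bound}.
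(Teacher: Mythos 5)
Your proposal is correct and follows essentially the same route as the paper's proof: the same gradient-flow ODEs from the factorized exponential loss, the same AM--GM bound $g(w)\ge 2\sqrt{p(1-p)}$ integrated to give the lower bound and divergence of $\winv$, and the same trapping argument $w_{sp,i}(t)\in[w_{sp,i}(0),w^*]$ feeding the upper bound $g(w_{sp,i}(t))\le pe^{-w_{sp,i}(0)}+\sqrt{p(1-p)}$. The one place you genuinely diverge is the final claim that the lower bound on $\alpha_i(t)$ tends to a positive value: the paper first derives a refined lower bound on $w_{sp,1}(t)$ itself by separating and integrating its ODE (yielding a $\tanh^{-1}$ expression, then a Mean Value Theorem step) before sending $p\to 1$, whereas you use the crude monotonicity bound $w_{sp,i}(t)\ge w_{sp,i}(0)$ for the numerator. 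Since the paper's refined numerator bound collapses to exactly $w_{sp,1}(0)$ in the limit $p\to 1$, your shortcut reaches the same final estimate $w_{sp,1}(0)\big/\log\big(e^{\winv(0)}+t\,e^{-\sum_i w_{sp,i}(0)}\big)$ with noticeably less computation; the paper's extra work only buys a sharper bound at fixed $p<1$, which the stated proposition does not require. Your flagged concern about making $w_{sp,j}(t)\le w^*$ a genuine ODE invariant is well placed, and is handled at the same (informal) level in the paper via the sign of $h$ and positivity of the common prefactor; to fully justify convergence to $w^*$ (rather than to some smaller limit) one should also note that $e^{-\winv(t)}\gtrsim (e^{\winv(0)}+Ct)^{-1}$ is non-integrable, a point neither you nor the paper spells out.
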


\begin{proof}
In this proof, $\winv(t)$ denotes the invariant weight at time $t$, while we often omit the time $t$ and interchangeably use $\winv$ for notational simplicity, and likewise for $w_{sp, i}(t)$.

Note that the network output is given by
    \begin{align*}
        f_{\wb}(\xb) &= \wb^T\xb \\
        &= \zinv w_{inv} + \zsp^T \wb_{sp} \\
        &= \zinv w_{inv} + \sum_{i=1}^D Z_{sp,i}w_{sp,i}.
    \end{align*}
    The exponential loss is defined by
    \begin{align*}
        L(\wb) &= \Ed_{(\Xb, Y)} [ e^{-f_{\wb}(\Xb)Y} ] \\
        &= \Ed_{\zb} \big[\exp\big({- (\zinv w_{inv} + \sum_{i=1}^D Z_{sp,i}w_{sp,i})\zinv} \big) \big]\\
        &= \Ed_{\zb} \big[\exp({-\winv - (2z_1-1)w_{sp,1} - \cdots - (2z_{D}-1)w_{sp,D} })\big] \\
        &= e^{-\winv} \prod_{i=1}^D \Ed_{\zb} [ e^{-(2z_{i}-1)w_{sp,i}} ] \\
        &= e^{-\winv} \prod_{i=1}^D ( pe^{-w_{sp,i}} + (1-p)e^{w_{sp,i}}).
    \end{align*}
    
    Then, thanks to symmetry of $\wb_{sp}$, it is enough to consider $\alpha := \frac{w_{sp,1}}{\winv}$. We first compute the gradient:
    \begin{align*}
        \frac{\partial L}{\partial \winv}
        &= -e^{-\winv} \prod_{i=1}^D (pe^{-w_{sp,i}}+(1-p)e^{w_{sp,i}}) \\
        \frac{\partial L}{\partial w_{sp,1}} 
        &= -e^{-\winv} (pe^{-w_{sp,1}}-(1-p)e^{w_{sp,1}}) 
        \prod_{i=2}^D (pe^{-w_{sp,i}}+(1-p)e^{w_{sp,i}}) .
    \end{align*}
    
    % Now, we compute the gradient of $\alpha$.
    % \begin{align*}
    %     \frac{d}{dt}\alpha 
    %     &= \frac{d}{dt}\left( \frac{w_{sp,1}}{\winv} \right) \\
    %     &= \frac{1}{\winv} \frac{d}{dt}w_{sp,1} - \frac{w_{sp,1}}{\winv^2} \frac{d}{dt}\winv \\
    %     &= -\frac{1}{\winv} \frac{\partial L}{\partial w_{sp,1}} + \frac{w_{sp,1}}{\winv^2} \frac{\partial L}{\partial \winv} \\
    %     &= \frac{1}{\winv}e^{-\winv} \prod_{i=2}^D (pe^{-w_{sp,i}}+(1-p)e^{w_{sp,i}}) \\
    %     &\qquad\times \left( (pe^{-w_{sp,1}}-(1-p)e^{w_{sp,1}}) -\alpha (pe^{-w_{sp,1}}+(1-p)e^{w_{sp,1}}) \right) 
    %     % \\
    %     % &= \frac{1}{\winv}e^{-\winv} \prod_{i=2}^D (pe^{-w_{sp,i}}+(1-p)e^{w_{sp,i}}) \\
    %     % &\qquad\times \left( (pe^{-w_{sp,1}}-(1-p)e^{w_{sp,1}}) -\alpha (pe^{-w_{sp,1}}+(1-p)e^{w_{sp,1}}) \right).
    % \end{align*}            
    % After sufficient time of training, $\alpha$ converges and thus $\frac{d}{dt}\alpha=0$ at that point. It means,
    % \begin{align*}
    %     \alpha^* = \frac{pe^{-w_{sp,1}}-(1-p)e^{w_{sp,1}}}{pe^{-w_{sp,1}}+(1-p)e^{w_{sp,1}}}.
    % \end{align*}
    
    % Therefore, we conclude that $\alpha^*<1$ and it goes to $1$ as $p\rightarrow1$. This completes the proof.

    Since $\frac{d}{dt} w_{inv} = -\frac{\partial L}{\partial \winv}$, the dynamics is given by the following differnetial equations.
    \begin{align*}
        \frac{d}{dt} w_{inv} 
        &= e^{-\winv} \prod_{i=1}^D (pe^{-w_{sp,i}}+(1-p)e^{w_{sp,i}}) \\
        \frac{d}{dt} w_{sp,1}
        &= e^{-\winv} (pe^{-w_{sp,1}}-(1-p)e^{w_{sp,1}}) 
        \prod_{i=2}^D (pe^{-w_{sp,i}}+(1-p)e^{w_{sp,i}}) .
    \end{align*}
    
    First we show that $\winv(t)$ diverges to $+\infty$ as $t$ goes $\infty$. We show this by computing its lower bound.
    \begin{align*}
        \frac{d}{dt} \winv 
        &= e^{-\winv} \prod_{i=1}^D (pe^{-w_{sp,i}}+(1-p)e^{w_{sp,i}}) \\ 
        &\ge e^{-\winv} \prod_{i=1}^D (2\sqrt{p(1-p)}) \\
        &= e^{-\winv} [4p(1-p)]^{\frac{D}{2}},
    \end{align*}
    where the inequality is obtained by AM-GM inequality. This implies $e^{\winv} d\winv \ge [4p(1-p)]^{\frac{D}{2}} dt$. Integrating both sides from $0$ to $t$, we get
    \begin{align*}
        e^{\winv(t)} - e^{\winv(0)} \ge [4p(1-p)]^{\frac{D}{2}} t
    \end{align*}
    or
    \begin{align} \label{eq: winv lower bound}
        \winv(t) \ge \log \left( e^{\winv(0)} + [4p(1-p)]^{\frac{D}{2}} t \right),
    \end{align}
    which shows that $\winv(t)$ diverges to $+\infty$ as $t \rightarrow \infty$. Note also that $w_{inv}$ strictly increases since $\frac{d}{dt} w_{inv}>0$.
    
    For $w_{sp,i}$, $\frac{d}{dt}w_{sp,i} =0$ implies $w_{sp,i}$ converges to $w_{sp,i}^*$ such that
    \begin{align*}
        pe^{-w_{sp,i}^*}-(1-p)e^{w_{sp,i}^*} = 0,
    \end{align*}
    namely, $w_{sp,i}^* = \frac12 \log \frac{p}{1-p}$.
    
    As similar to $\winv$, $w_{sp,1}$ strictly increases if and only if $w_{sp,1}<\frac12 \log \frac{p}{1-p}$. Based on the assumptions that $0 < w_{sp,i}(0) < \frac12 \log \frac{p}{1-p}$, we conclude that $w_{sp,1}$ monotonically converges to $\frac12 \log \frac{p}{1-p}$. As $p$ goes to $1$, $\frac12 \log \frac{p}{1-p}$ is sufficiently large and we can assume $w_{sp,i}(0) < \frac12 \log \frac{p}{1-p}$. 
    
    Now, we fix $0<t<T$ for given $T$ and compute an upper bound of $w_{inv}$. Using $w_{sp,i}(t)<\frac12 \log \frac{p}{1-p}$, we get
    \begin{align*}
        \frac{d}{dt} w_{inv} 
        &= e^{-\winv} \prod_{i=1}^D (pe^{-w_{sp,i}}+(1-p)e^{w_{sp,i}}) \\
        &< e^{-\winv} \prod_{i=1}^D \left(pe^{-w_{sp,i}(0)} + (1-p)\sqrt{\frac{p}{1-p}} \right) \\
        &= e^{-\winv} \prod_{i=1}^D \left( pe^{-w_{sp,i}(0)} + \sqrt{p(1-p)} \right) 
    \end{align*}
    which implies
    \begin{align*}
        e^{\winv} d w_{inv} < \prod_{i=1}^D \left( pe^{-w_{sp,i}(0)} + \sqrt{p(1-p)} \right)  dt.
    \end{align*}
    Integrating both sides from $0$ to $t$, we get
    \begin{align} \label{eq: winv bound}
        w_{inv}(t) < \log \left( e^{w_{inv}(0)} + \prod_{i=1}^D \left( pe^{-w_{sp,i}(0)}+ \sqrt{p(1-p)}\right) t \right).
    \end{align}
    
    Similarly, we compute a lower bound of $w_{sp,1}$ on $0<t<T$. Before we start, note that $\winv(t) < \winv(T)=:M$ from monotonicity. 
    \begin{align*}
        \frac{d}{dt}w_{sp,1} &= e^{-\winv} (pe^{-w_{sp,1}}-(1-p)e^{w_{sp,1}}) 
        \prod_{i=2}^D (pe^{-w_{sp,i}}+(1-p)e^{w_{sp,i}}) \\
        &> e^{-M} (pe^{-w_{sp,1}}-(1-p)e^{w_{sp,1}}) \prod_{i=2}^D (2\sqrt{p(1-p)}) \\
        &= e^{-M} [4p(1-p)]^{\frac{D-1}{2}} (pe^{-w_{sp,1}}-(1-p)e^{w_{sp,1}})
    \end{align*}
    induces
    \begin{align*}
        \frac{1}{pe^{-w_{sp,1}}-(1-p)e^{w_{sp,1}}} d w_{sp,1}
        > e^{-M} [4p(1-p)]^{\frac{D-1}{2}} dt .
    \end{align*}
    Integrating both sides from $0$ to $t<T$, we get
    \begin{align*}
        \left[\frac{1}{\sqrt{p(1-p)}}\tanh^{-1} \left( \sqrt{\frac{1-p}{p}} e^{w_{sp, 1}}\right) \right]_0 ^t
        >  e^{-M} [4p(1-p)]^{\frac{D-1}{2}} t
    \end{align*}
    or
    \begin{align} \label{eq: wsp bound}
        w_{sp,1}(t) > \frac12 \log \frac{p}{1-p} + \log \; \tanh \left( 
        \tanh^{-1}(\sqrt{\frac{1-p}{p}} e^{w_{sp,1}(0)}) + e^{-M} 2^{D-1} [p(1-p)]^{\frac D2} t \right). 
    \end{align}
    Combining \eqref{eq: winv bound} and \eqref{eq: wsp bound}, we conclude that 
    \begin{align} \label{eq: alpha bound}
        \alpha_p (t) &= \frac{w_{sp,1}(t)}{w_{inv}(t)} \\
        &> \frac{\frac12 \log \frac{p}{1-p} + \log \; \tanh \left( 
        \tanh^{-1}(\sqrt{\frac{1-p}{p}} e^{w_{sp,1}(0)}) + e^{-M} 2^{D-1} [p(1-p)]^{\frac D2} t \right) }{\log \left( e^{w_{inv}(0)} + t \prod_{i=1}^D \left( pe^{-w_{sp,i}(0)}+ \sqrt{p(1-p)}\right) \right)}
    \end{align}
    for $0<t<T$. Note that $\alpha_p(t)$ is positive in $0<t<T$, since both $w_{sp, 1}(t)$ and $\winv(t)$ is monotonically increasing in $0<t<T$, and $0 < w_{sp, 1}(0), \winv(0)$ by assumptions.
    
    The numerator becomes
    \begin{align*}
        \frac12 &\log \frac{p}{1-p} + \log \; \tanh \left( 
        \tanh^{-1}(\sqrt{\frac{1-p}{p}} e^{w_{sp,1}(0)}) + e^{-M} 2^{D-1} [p(1-p)]^{\frac D2} t \right) \\
        &= \log \left[ \sqrt{\frac{p}{1-p}} \tanh \left( 
        \tanh^{-1}(\sqrt{\frac{1-p}{p}} e^{w_{sp,1}(0)}) + e^{-M} 2^{D-1} [p(1-p)]^{\frac D2} t \right) \right] \\
        &= \log \left[ \sqrt{\frac{p}{1-p}} \left( 
        \sqrt{\frac{1-p}{p}} e^{w_{sp,1}(0)} + e^{-M} 2^{D-1} [p(1-p)]^{\frac D2} t \;\sech^2 c \right) \right]
    \end{align*}
    for some $c$ such that 
    \begin{align*}
        \tanh^{-1}(\sqrt{\frac{1-p}{p}} e^{w_{sp,1}(0)}) < c < \tanh^{-1}(\sqrt{\frac{1-p}{p}} e^{w_{sp,1}(0)}) + e^{-M} 2^{D-1} [p(1-p)]^{\frac D2} t.
    \end{align*}
    We use $f(x+y) = f(x) + y f'(c)$ by the Mean Value Theorem (MVT) at the last line. 
    
    Notably, if we take a limit $p \rightarrow 1$, the numerator becomes
    \begin{align*}
        \lim_{p\rightarrow 1} \log \left[ e^{w_{sp,1}(0)} + e^{-M} 2^{D-1} p^{\frac{D+1}{2}} (1-p)^{\frac{D-1}{2}}  t\;\sech^2 c \right]
        = w_{sp,1}(0).
    \end{align*}
    Similarly, the denominator becomes
    \begin{align*}
        \lim_{p\rightarrow1} \log &
        \left( e^{w_{inv}(0)} + t\prod_{i=1}^D \left( pe^{-w_{sp,i}(0)}+ \sqrt{p(1-p)}\right)  \right) \\
        &= \log \left( e^{w_{inv}(0)} + t\prod_{i=1}^D e^{-w_{sp,i}(0)} \right) \\
        &= \log \left( e^{w_{inv}(0)} + t \exp\left(-\sum_{i=1}^D w_{sp,i}(0)\right) \right)
    \end{align*}
    
    Therefore, for a fixed $0<t<T$, we conclude that 
    \begin{equation}
    \label{eq: alphap(t)}
    \begin{split}
        \lim_{p\rightarrow1} \alpha_p(t) &= \lim_{p\rightarrow1} \frac{w_{sp,1}(t)}{\winv(t)} \\ 
        &\ge \frac{w_{sp,1}(0)}{\log \left( e^{w_{inv}(0)} +  t \exp\left(-\sum_{i=1}^D w_{sp,i}(0)\right) \right)} \\
        &> \frac{w_{sp,1}(0)}{\log(e^{\winv(0)} +  T \exp\left(-\sum_{i=1}^D w_{sp,i}(0)\right))} \\
        & \ge \frac{w_{sp,1}(0)}{\log T + \frac1T \exp\left(\winv(0)+\sum_{i=1}^D w_{sp,i}(0)\right) - \sum_{i=1}^D w_{sp,i}(0)} 
    \end{split}
    \end{equation}
    where we use the inequality $\log (x+y) \le \log x + \frac yx$ in the last line.
\end{proof}

{The key insights from Proposition \ref{supple prop: weight ratio} can be summarized as follows:}

(1) Weight ratio $\alpha_i(t)$ converges to $0$ as $t \rightarrow \infty$.

(2) However, for a fixed $t < T$, $\alpha_i(t) > 0$.

(3) When $t < T$ and $p \rightarrow 1$, i.e., the environment is almost perfectly biased, the convergence rate of (1) is remarkably slow as in (\ref{eq: alphap(t)}). In other words, there exists $c>0$ such that $\frac{c}{\log t} < \alpha_p(t)$ over $0<t<T$ if $p$ is sufficiently close to $1$.

{This results afford us intriguing perspective on the fundamental factors behind the biased classifiers. If we situate the presented theoretical example in an ideal scenario in which infinitely many data and sufficient training time is provided, our result (1) shows that the pretrained classifier becomes fully invariant to the spurious correlations. However, in practical setting with finite training time and number of samples, our result (2) shows that the pretrained model inevitably rely on the spuriously correlated features.}

\begin{wrapfigure}[14]{r}{0.4\textwidth}
\includegraphics[width=0.4\textwidth]{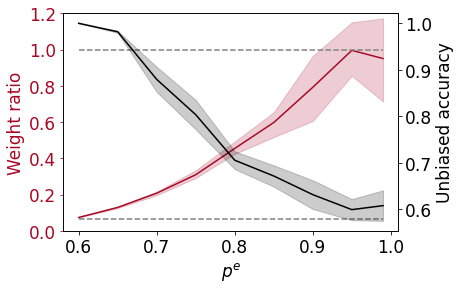}
\caption{Implemented results of presented example.}
\label{supple fig: alpha}
\end{wrapfigure}

~{Beyond theoretical results, we empirically observe that the weight ratio $\alpha_i$ of pretrained classifiers indeed increases as $p^e \rightarrow 1$. We simulate the example presented in section 3.2 of the main paper, where the dimensionality $D$ is set to 15, and probability $p^e$ varies from 0.6 (weakly biased) to 0.99 (severely biased). We train a linear classifier for 500 epochs with batch size of 1024, and measure the unbiased accuracy on test samples generated from environment $e \in \mathcal{E}_{test}$. We also measure weight ratio $ \mean(\tilde{w}_{sp}) / \tilde{w}_{inv}$, where $\mean(\bar{w}_{sp})$ denotes the average of pretrained spurious weights $\{w_{sp, i}\}_{i=1}^{D}$. 
To enable the end-to-end training, we use binary cross entropy loss instead of exponential loss, with setting $\mathcal{Y} = \{0, 1\}$ instead of $\mathcal{Y} = \{-1, 1\}$. We do not consider pruning process in this implementation. Figure \ref{supple fig: alpha} shows that the weight ratio increases to 1 in average as $p^e \rightarrow 1$. It implies that the spurious features $\Zb_{sp}^e$ participate almost equally to the invariant feature $\zinv^e$ in the presence of strong spurious correlations. In this worst case, it is frustratingly difficult to discriminate weights necessary for OOD generalization in biased environment, resulting in the failure of learning optimal pruning parameters. Simulation results are averaged on 15 different random seeds.}

\section{Example of geometrical misalignment}
\label{sec: misalignment}

In this section, we present a simple example illustrating the potential adverse effect of spurious correlations on latent representations. Consider independent arbitrary samples within the same class \(\boldsymbol{X}_i^{b}, \boldsymbol{X}_j^{b} \sim P^{b}_{\boldsymbol{X}^b \mid Y^b=y}\) and \(\boldsymbol{X}^{d} \sim P^{d}_{\boldsymbol{X}^d \mid Y^d=y} \) for a common \(y \in \{-1, 1\}\) and environments \(b, d\) where \(b \in \mathcal{E}_{train}\) and \(d \in \mathcal{E}_{test}\). Let \(\boldsymbol{W} \in \mathbb{R}^{Q \times (D+1)}\) be a weight matrix representation of a linear mapping \(T: \{-1, 1\}^{D+1} \rightarrow \mathbb{R}^Q\) which encodes the embedding vector of a given sample. We denote such embedding as \(\boldsymbol{h}^e = \boldsymbol{W}\boldsymbol{X}^e\) for some \(e \in \mathcal{E}\). We assume that \(\boldsymbol{W}\) is initialized as to be semi-orthogonal \cite{saxe2013exact, hu2020provable} for simplicity. Then the following lemma reveals the geometrical misalignment of embeddings in the presence of strong spurious correlations:

\begin{lemma}
\label{supple lemma: misalignment}
Given \(y \in \{-1, 1\}\), let \(\boldsymbol{h}^b_i, \boldsymbol{h}^b_j, \boldsymbol{h}^d\) be embeddings of \(\boldsymbol{X}_i^{b}, \boldsymbol{X}_j^{b}, \boldsymbol{X}^{d}\) respectively. Then, the expected cosine similarity between \(\boldsymbol{h}^b_i\) and \(\boldsymbol{h}^d\) is derived as:
\begin{equation}
\label{supple eq: cos(b,d)}
    \mathbb{E}\bigg[ \frac{\langle \boldsymbol{h}^{b}_i, \boldsymbol{h}^{d} \rangle}{\|\boldsymbol{h}^{b}_i\| \cdot \|\boldsymbol{h}^{d}\| } \given[\bigg] Y^b=y, Y^d=y \bigg] = \frac{1}{D+1},
\end{equation}
while the expected cosine similarity between \(\boldsymbol{h}^b_i\) and \(\boldsymbol{h}^b_j\) is derived as:
\begin{equation}
\label{supple eq: cos(b,b)}
    \mathbb{E}\bigg[ \frac{\langle \boldsymbol{h}^{b}_i, \boldsymbol{h}^{b}_j \rangle}{\|\boldsymbol{h}^{b}_i\| \cdot \|\boldsymbol{h}^{b}_j\| } \given[\bigg] Y^b=y \bigg] = \frac{1+D(2p^b-1)^2}{D+1},
\end{equation}
where \(p^b\) is a probability parameter of Bernoulli distribution of i.i.d variable \({Z}_{sp,i}^b\), similar to \(p^e\) in the main paper.
\end{lemma}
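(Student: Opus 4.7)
The plan is to exploit the semi-orthogonality of $\Wb$ to reduce the cosine similarity computation in embedding space to an inner product computation in the input space, and then use independence across environments and samples to evaluate the resulting expectations.

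First I would observe that semi-orthogonality (which we interpret in the sense $\Wb^T\Wb=\Ib_{D+1}$, the only direction that yields the stated clean formulas) implies that the linear map $T$ preserves both inner products and norms: for any $\Xb^e$, we have $\|\hb^e\|^2 = \Xb^{eT}\Wb^T\Wb\Xb^e = \|\Xb^e\|^2$, and likewise $\langle \hb^b_i, \hb^d \rangle = \langle \Xb^b_i, \Xb^d\rangle$ and $\langle \hb^b_i, \hb^b_j \rangle = \langle \Xb^b_i, \Xb^b_j\rangle$. Crucially, since every entry of $\Xb^e\in\{-1,1\}^{D+1}$ has absolute value $1$, the norm $\|\Xb^e\|^2 = D+1$ is deterministic. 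This lets me pull the normalizers out of the expectation, reducing the problem to computing $\Ed[\langle \Xb^b_i,\Xb^d\rangle\mid Y^b=y,Y^d=y]$ and $\Ed[\langle \Xb^b_i,\Xb^b_j\rangle\mid Y^b=y]$, each divided by $D+1$.

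Next I would decompose each inner product into an invariant contribution and a spurious contribution, using $\Xb^e=(Z^e_{inv},\Zb^e_{sp})$. The invariant term is trivial: conditioning on $Y^e=y$ pins $Z^e_{inv}=y$, so the invariant product always equals $y^2=1$. For the spurious term $\sum_{k=1}^D Z^{(\cdot)}_{sp,k}Z^{(\cdot)}_{sp,k}$, I would use independence of $\Xb^b_i$ from $\Xb^d$ (respectively from $\Xb^b_j$) together with the i.i.d.\ Bernoulli structure of each spurious coordinate to factorize the expectation entry-wise. The relevant conditional means are
\begin{equation}
\Ed[Z^b_{sp,k}\mid Y^b=y]=(2p^b-1)y,\qquad \Ed[Z^d_{sp,k}\mid Y^d=y]=0,
\end{equation}
where the second follows from $p^d=\tfrac12$.

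For the cross-environment similarity, each of the $D$ spurious terms then contributes $(2p^b-1)y\cdot 0 = 0$, so only the invariant part survives, giving numerator $1$ and hence $1/(D+1)$. For the within-environment similarity, each spurious term contributes $((2p^b-1)y)^2=(2p^b-1)^2$ (independent of $y$), yielding $1+D(2p^b-1)^2$ in the numerator, hence $\bigl(1+D(2p^b-1)^2\bigr)/(D+1)$. There is no serious obstacle here; the only thing to be careful about is justifying that the norms factor out of the expectation exactly, which is immediate from the $\pm 1$ structure of the inputs, and verifying that the independence assumption applies both across environments $b,d$ and across the distinct training samples $\Xb^b_i,\Xb^b_j$ (which is implicit in the i.i.d.\ sampling). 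The resulting comparison $\tfrac{1+D(2p^b-1)^2}{D+1}\gg\tfrac{1}{D+1}$ when $p^b$ is close to $1$ then cleanly exposes the geometrical misalignment driven by spurious correlations.
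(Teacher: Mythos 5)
Your proposal is correct and follows essentially the same route as the paper: use semi-orthogonality ($\Wb^T\Wb=\Ib$) to reduce embedding inner products and norms to input-space ones, note that $\|\Xb^e\|^2=D+1$ deterministically, and evaluate the invariant and spurious contributions via independence and the Bernoulli conditional means. If anything, your justification of the vanishing cross-environment spurious term (zero mean of $Z^d_{sp,k}$ since $p^d=\tfrac12$, combined with independence) is stated more precisely than the paper's appeal to ``orthogonality of the spurious component vectors.''
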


\begin{proof}
Let \(\boldsymbol{X}^e=\boldsymbol{V}_{inv}^e + \boldsymbol{V}_{sp}^e\) for the sample from an arbitrary environment \(e\) in general, where \(\boldsymbol{v}_{inv}^e, \boldsymbol{v}_{sp}^e \in \{-1,1\}^{D+1}\) are invariant and spurious component vector, respectively:

\begin{equation}
\label{supple eq: v_inv}
{V}_{inv, j}^e= 
\begin{cases}
    Z^e_{inv}, & \text{if } j=1 \\
    0, & \text{otherwise } ,
\end{cases}
\end{equation}
\begin{equation}
\label{supple eq: v_sp}
{V}_{sp, j}^e=
\begin{cases}
    {Z}^e_{sp, j}, & \text{if } j=2, \dots, D+1 \\
    0, & \text{otherwise }.
\end{cases}
\end{equation} 
Thus, \(\boldsymbol{V}_{inv}^e\) and \(\boldsymbol{V}_{sp}^e\) are orthogonal. Given \(Y^b=y\) and \(Y^d=y\) for some \(y \in \{-1, 1\}\), the cosine similarity between \(\boldsymbol{h}_i^{b}\) and \(\boldsymbol{h}^{d}\) is expressed as follows:

\begin{equation}
\label{supple eq: cos_sim_bd}
\begin{split}
    \mathbb{E}\bigg[ \frac{\langle \boldsymbol{h}_i^{b}, \boldsymbol{h}^{d} \rangle}{\| \boldsymbol{h}_i^{b} \| \| \boldsymbol{h}^{d} \| } \given[\bigg] Y^b=y, Y^d=y \bigg] &= \mathbb{E}\bigg[ \frac{\langle \boldsymbol{X}_i^{b}, \boldsymbol{W}^T \boldsymbol{W} \boldsymbol{X}^{d} \rangle}{\| \boldsymbol{h}_i^{b} \| \| \boldsymbol{h}^{d} \| } \given[\bigg] Y^b=y, Y^d=y \bigg] \\
    &= \mathbb{E}\bigg[ \frac{\langle \boldsymbol{X}_i^{b}, \boldsymbol{X}^{d} \rangle}{D+1} \given[\bigg]  Y^b=y, Y^d=y \bigg] \\
    &= \mathbb{E} \bigg[ \frac{\langle \boldsymbol{V}^{b}_{i, inv}+\boldsymbol{V}^{b}_{i, sp}, \boldsymbol{V}^{d}_{inv}+\boldsymbol{V}^{d}_{sp} \rangle}{D+1}  \given[\bigg] Y^b=y, Y^d=y \bigg] \\
    &= \frac{1}{D+1},
\end{split}
\end{equation}
where \(\boldsymbol{V}_{i, inv}^b\) and \(\boldsymbol{V}_{i, sp}^b\) represent the invariant and spurious component vector of \(\boldsymbol{X}_i^b\), respectively, and the second equality comes from the semi-orthogonality of \(\boldsymbol{W}\). The last equality comes from the orthogonality of spurious component vector from different environment \(b \in \mathcal{E}_{train}\) and \(d \in \mathcal{E}_{test}\).

On the other hand, the expected cosine similarity between two arbitrary embeddings \(\boldsymbol{h}_i^{b}\) and \(\boldsymbol{h}_j^{b}\) from the biased environment \(b\) is expressed as follows:
\begin{equation}
\label{supple eq: cos_sim_bb}
\begin{split}
    \mathbb{E}\bigg[ \frac{\langle \boldsymbol{h}_i^{b}, \boldsymbol{h}_j^{b} \rangle}{\| \boldsymbol{h}_i^{b} \| \| \boldsymbol{h}_j^{b} \| } \given[\bigg] Y^b=y \bigg] 
    &= \mathbb{E} \bigg[ \frac{\langle \boldsymbol{V}^{b}_{i, inv} + \boldsymbol{V}^{b}_{i, sp}, \boldsymbol{V}^{b}_{j, inv} + \boldsymbol{V}^{b}_{j, sp} \rangle}{D+1}  \given[\bigg] Y^e=y\bigg] \\
    &= \frac{1+D(2p^b-1)^2}{D+1},
\end{split}
\end{equation}
where the last equality comes from the expectation of product of independent 
Bernoulli variables.
\end{proof}

The gap between (\ref{supple eq: cos(b,d)}) and (\ref{supple eq: cos(b,b)}) unveils the imbalance of distance between same-class embeddings from different environments on the unit hypersphere; embeddings from the training environment are more closely aligned to other embeddings from the same environment than embeddings from test environment at initial even when all samples are generated within the same class. While the Lemma \ref{supple lemma: misalignment} is only applicable to the initialized \(\boldsymbol{W}\) before training, such imbalance may be worsened if \(\boldsymbol{W}\) learns to project the samples on the high-dimensional subspace where most of its basis are independent to the invariant features. This sparks interests in designing weight pruning masks to aggregate the representations from same-class samples all together. Indeed, in this simple example, we can address this misalignment by masking out every weight in \(\boldsymbol{W}\) except the first column, which is associated with the invariant feature.

{From this point of view, we revisit the proposed alignment loss in main paper:
\begin{equation}
\label{supple eq: debiased alignment loss}
\ell_{align}\Big( \{x_i, y_i\}_{i=1}^{|S|}; \boldsymbol{\tilde{W}}, \Theta \Big) = \mathbb{E}_{\boldsymbol{m} \sim G(\Theta)}\Big[ \ell_{con}(S_{bc}, S; \boldsymbol{m} \odot \boldsymbol{\tilde{W}}) + \ell_{con}(S_{ba}, S_{bc}; \boldsymbol{m} \odot \boldsymbol{\tilde{W}}) \Big],
\end{equation}
where the first term reduces the gap between bias-conflicting samples and others, while the second term prevents bias-aligned samples from being aligned too close each other. In other words, the first term is aimed at increasing the cosine similarity between representations of same-class samples with different spurious attributes, as $\boldsymbol{h}_{i}^b$ and $\boldsymbol{h}^d$ in this example. The second term serves as a regularizer that pulls apart same-class bias-aligned representations, as $\boldsymbol{h}_{i}^b$ and $\boldsymbol{h}_{j}^b$ in this example. Thus we can leverage abundant bias-aligned samples as negatives regardless of their class in second term, while \cite{zhang2022correct} limits the negatives to samples with different target label but same bias label, which are often highly scarce in a biased dataset.}

\section{Additional results}
\label{sec: additional results}
\textbf{Comparisons to the pruning baselines.} Pruning (debiasing) appears to suffer from the generalization-efficiency tradeoff; improving computational efficiency (OOD generalization) does not always guarantee improvement in OOD generalization (efficiency). Unlike this, our framework reliably improves both generalization and efficiency as shown in Table \ref{supple table: pruning}. Note that the standard pruning algorithms \cite{wang2020picking} fail to improve the unbiased accuracy in CIFAR10-C.

\begin{table}[htbp]
\caption{Test (unbiased) accuracy ($\%$) on standard and corrupted CIFAR10 (Bias ratio=5\(\%\)). Pruning ratio=90.0$\%$ for GraSP and $(92.4\%, 90.4\%)$ for DCWP on (CIFAR10, CIFAR10-C).} 
\label{supple table: pruning}
\centering
\resizebox{0.4\textwidth}{!}{
\begin{tabular}{c c c c}
\toprule
{Dataset} & {Full-size} & {GraSP \cite{wang2020picking}} & {DCWP} \\
\midrule
{CIFAR10} & {86.76} & \cellcolor{OliveGreen!15} 85.64 & \cellcolor{OliveGreen!15} 86.32 \\
\midrule
{CIFAR10-C} & {45.66} & \cellcolor{BrickRed!15} 44.21 & \cellcolor{OliveGreen!15} 60.24 \\
\bottomrule
\end{tabular}
}
\end{table}

\textbf{Analysis of sparsity level.} One may concern that a trade-off between performance and sparsity may exist. For example, networks with mild sparsity may still be over-parameterized and thus not fully debiased, whereas networks with high sparsity do not have enough capacity to preserve the averaged accuracy. In order to investigate the trade-off, we measure the unbiased accuracy by explicitly controlling the pruning ratio with varying $\lambda_{\ell_1}$. Figure \ref{supple fig: tradeoff} shows that (1) the trade-off between performance and sparsity does exist, while (2) the proposed framework is reasonably tolerant to high sparsity in terms of generalization. We conjecture that such tolerance is owing to the \textit{prioritized} elimination of spurious weights; the networks can be compressed to a significant extent without hurting the generalization after pruning out the spurious weights. 

\begin{figure}[t]
\centering
\includegraphics[width=0.8\textwidth]{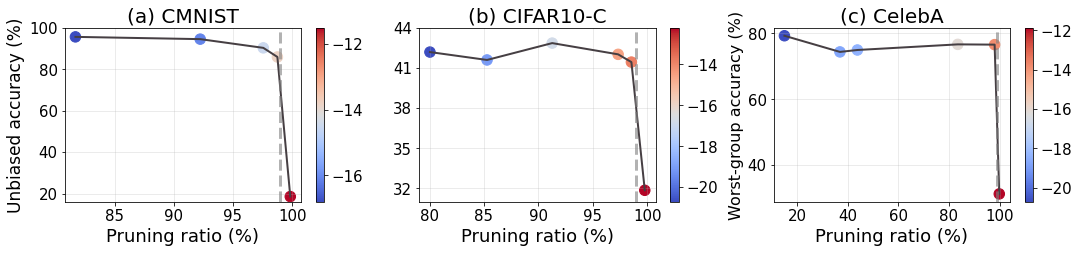} 
\caption{Analysis on the sparsity level. Bias ratio=1$\%$ for (a), (b). Color bar: log-scaled $\lambda_{\ell_1}$. Dotted line: $\text{ratio}=99\%$.}
\label{supple fig: tradeoff}
\end{figure}

\section{{Experimental setup}}
\label{sec: experimental setup}

\begin{figure}[htbp]
\centering
\includegraphics[width=0.9\textwidth]{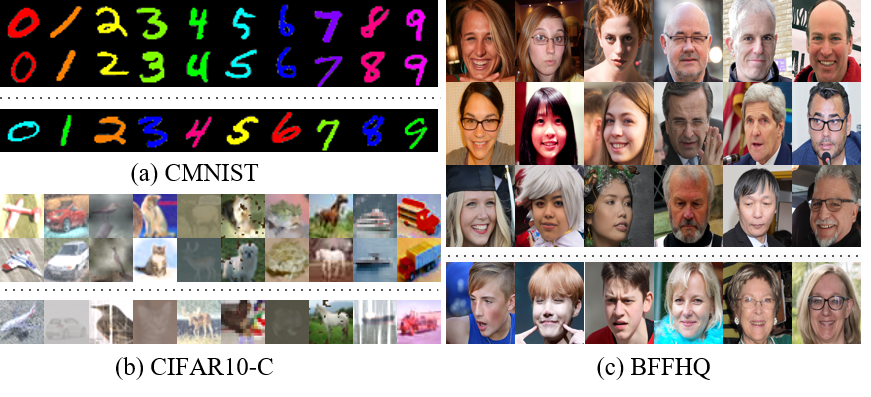} 
\caption{Example images of datasets. The images above the dotted line denote the bias-aligned samples, while the ones below the dotted line are the bias-conflicting samples. For CMNIST and CIFAR10-C, each column indicates each class. For BFFHQ, the group of three columns indicates each class.} \label{fig: examples}
\end{figure}

\subsection{Datasets} We mainly follow \cite{nam2020learning, lee2021learning} to evaluate our framework on Color-MNIST (CMNIST), Corrupted CIFAR-10 (CIFAR10-C) and Biased FFHQ (BFFHQ) as presented in Figure \ref{fig: examples}.  

\textbf{CMNIST.} We first consider the prediction task of digit class which is spuriously correlated to the pre-assigned color, following the existing works \cite{bahng2020learning, nam2020learning, lee2021learning, tartaglione2021end}. Each digit is colored with certain type of color, following \cite{nam2020learning, lee2021learning}. The ratio of bias-conflicting samples, i.e., bias ratio, is varied in range of $\{0.5\%, 1.0\%, 2.0\%, 5.0\%\}$, where the exact number of (bias-aligned, bias-conflicting) samples is set to:  (54,751, 249)-0.5$\%$, (54,509, 491)-1$\%$, (54,014, 986)-2$\%$, and (52,551, 2,449)-5$\%$.

\textbf{CIFAR10-C.} Each sample in this dataset is generated by corrupting original samples in CIFAR-10 with certain types of corruption. Among 15 different corruptions introduced in the original paper \cite{hendrycks2019benchmarking}, we select 10 types which are \texttt{Brightness, Contrast, Gaussian Noise, Frost, Elastic Transform, Gaussian Blur, Defocus Blur, Impulse Noise, Saturate}, and \texttt{Pixelate}, following \cite{lee2021learning}. Each of these corruption is spuriously correlated to the object classes of CIFAR-10, which are \texttt{Plane, Car, Bird, Cat, Deer, Dog, Frog,
Horse, Ship}, and \texttt{Truck}. 
We use the samples corrupted in most severe level among five different severity, following \cite{lee2021learning}. The exact number of (bias-aligned, bias-conflicting) samples is set to:  (44,832, 228)-0.5$\%$, (44,527, 442)-1$\%$,
(44,145, 887)-2$\%$, and (42,820, 2,242)-5$\%$.

\textbf{BFFHQ.} Each sample in this biased dataset are selected from Flickr-Faces-HQ (FFHQ) Dataset \cite{karras2019style}, where we conduct binary classifications with considering (\texttt{Age}, \texttt{Gender}) as target and spuriously correlated attribute pair following \cite{kim2021biaswap, lee2021learning}. Specifically, majority of training images correspond to either young women (i.e., aged
10-29) or old men (i.e., aged 40-59). This dataset consists of 19,104 number of such bias-aligned samples and 96 number of bias-conflicting samples, i.e., old women and young men.

\textbf{CelebA.} For CelebA, we consider (\texttt{Blonde Hair}, \texttt{Male}) as (target, spurious) attribute pair, following \cite{nam2020learning, hong2021unbiased, sagawa2019distributionally}. Pixel resolutions and batch size are $256 \times 256$ and 128, respectively. The exact number of samples for the prediction task follows that from \cite{hong2021unbiased}.

\subsection{Simulation settings}

\textbf{Architecture details.} We use a simple convolutional network with three convolution layers for CMNIST, with feature map dimensions of 64, 128 and 256, each followed by a ReLU activation and a batch normalization layer following \cite{zhang2021can}. For CIFAR10-C and BFFHQ, we use ResNet-18 with pretrained weights provided in PyTorch \texttt{torchvision} implementations. Each convolutional network and ResNet-18 includes $1.3 \times 10^6$ and $2.2 \times 10^7$ number of parameters, respectively. 
We assign a pruning parameter for each weight parameter except bias in deep networks. Each of pruning parameter is initialized with value 1.5 so that the initial probability of preserving the corresponding weight is set to $\sigmoid(1.5) \approx 0.8$ in default.

\textbf{Training details.} We first train bias-capturing networks using GCE loss (q=0.7) for (CMNIST, BFFHQ, CelebA), with (2000, 10000, 10000) iterations, respectively. For CIFAR10-C, we use epoch-ensemble-based mining algorithms presented in \cite{zhao2021learning}, which select samples cooperated with an ensemble of predictions at each epoch to prevent overfitting. We use b-c score threshold $\tau=0.8$ and the confidence threshold $\eta=0.05$ as suggested in the original paper. 

Then, main networks are pretrained for 10000 iterations using an Adam optimizer with learning rates $0.01$, $0.001$, $0.001$ and $0.0001$ for CMNIST, CIFAR10-C, BFFHQ, and CelebA, respectively. 

We train pruning parameters for 2000 iterations using a learning rate $0.01$, upweighting hyperparameter $\lambda_{up}=80$ and a balancing hyperparameter $\lambda_{align}=0.05$ for each dataset. We use a Lagrangian multiplier $\lambda_{\ell_1}=10^{-8}$ for CMNIST, and $\lambda_{\ell_1}=10^{-9}$ for CIFAR10-C, BFFHQ and CelebA. Specifically, we set $\lambda_{\ell_1}$ by considering the size of deep networks, where we found that the value within range $\Oc(0.1*n^{-1})$ serves as a good starting point where $n$ is the number of parameters. 

After pruning, we finetune the networks with decaying learning rate to $0.001$ for CMNIST and $0.0005$ for others. We use $\lambda_{align}=0.05$ consistently. Then, we use $\lambda_{up}=80$ for BFFHQ, $\lambda_{up}=20$ for CelebA, and $\lambda_{up}=\{10, 30, 50, 80\}$ for CMNIST and CIFAR10-C with $\{0.5\%, 1.0\%, 2.0\%, 5.0\%\}$ of bias ratio, respectively. 

Considering the pruning as a strong regularization, we did not use additional capacity control techniques such as early stopping or strong \(\ell_2\) regularization presented in \cite{sagawa2020investigation, liu2021just}. 

\textbf{Data augmentations.} We did not use any kinds of data augmentations which may implicitly enforce networks to encode invariances. For the BFFHQ and CelebA dataset, we only apply random horizontal flip. For the CIFAR10-C dataset, we take $32\times32$ random crops from image padded by 4 pixels followed by random horizontal flip, following \cite{nam2020learning}. We do not use any kinds of augmentations in CMNIST.

\textbf{Baselines.} We use the official implementations of Rebias, LfF, DisEnt and JTT released by authors, and reproduce EnD and MRM by ourselves. For DisEnt, we use the official hyperparameter configurations provided in the original paper. We use $q=0.7$ for LfF as suggested by authors on every experiment. For Rebias, we use the official hyperparameter configurations for CMNIST, and train for 200 epochs using Adam optimizer with learning rate 0.001 and RBF kernel radius of 1 for other datasets. For MRM, we use $\lambda_{\ell_1}$ of $10^{-8}$ for CMNIST following the original paper, and $10^{-9}$ for the others. For EnD, we set the multipliers $\alpha$ for disentangling and $\beta$ for entangling to 1.

%%%%%%%%% REFERENCES
{\small
\bibliographystyle{ieee_fullname}
\bibliography{egbib}
}

\end{document}